\title{A Unified Analysis of Dynamic Interactive Learning}
\author{Xing Gao, Thomas Maranzatto, Lev Reyzin\\
$ $\\
Department of Mathematics, Statistics, and Computer Science\\
University of Illinois at Chicago\\
\texttt{\{xgao53,tmaran2,lreyzin\}@uic.edu}}
\date{}
\theoremstyle{plain}
\newtheorem{theorem}{Theorem}[section]
\newtheorem{lemma}[theorem]{Lemma}
\newtheorem{corollary}[theorem]{Corollary}
\theoremstyle{definition}
\newtheorem{definition}[theorem]{Definition}
\theoremstyle{remark}
\begin{document}

\maketitle

\begin{abstract}
In this paper we investigate the problem of learning evolving concepts over a combinatorial structure. Previous work by~\citet{EmamjomehZadehKMS20} introduced dynamics into interactive learning as a way to model non-static user preferences in clustering problems or recommender systems. We provide many useful contributions to this problem.  
First, we give a framework that captures both of the models analyzed by
~\citep{EmamjomehZadehKMS20}, which allows us to study any type of concept evolution and matches the same query complexity bounds and running time guarantees of the previous models.
Using this general model we solve the open problem of closing the gap between the upper and lower bounds on query complexity.  Finally, we study an efficient algorithm where the learner simply follows the feedback at each round, and we provide mistake bounds for low diameter graphs such as cliques, stars, and general $o(\log n)$ diameter graphs by using a Markov Chain model. 
\end{abstract}

\section{Introduction} 
The problem of recommending products or media is ubiquitous in many practical settings such as search engines, online marketplaces, or media streaming services (e.g. Google search, Amazon, Spotify, etc.).  In such settings any algorithm that tries to optimize recommendations receives implicit feedback from users in the system.  This feedback is then used to refine future queries.

Drawing inspiration from 
earlier work on query learning by~\citet{Angluin87}, 
as well as more recent models for interactive clustering~\citep{AwasthiBV17,BalcanB08,LelkesR15},
\citet{EmamjomehZadehK17} considered such product recommendation
problems from the perspective of combinatorial learning, where specific orderings of recommendations are nodes in a (very large) digraph.  In this graph there is a distinguished node that corresponds to the recommendation that the learner wishes to discover.  This can be thought of as the `ideal' ordering of products in a marketplace, or the `best' recommendation in a streaming service.  A directed edge exists between nodes $s$ and $s'$ if the user is allowed to propose $s'$ as a response to $s$.  For example, a user could select two items from a list and propose they need to be swapped in their ideal ordering.  If the learner proposes a node and this is not the target, it receives noisy (random~\citep{EmamjomehZadehK17} or even adversarial~\citep{DereniowskiTUW19}) feedback in the form of an edge on the shortest path from the proposed node and the target.  This form of feedback is 
similar to the correction queries from query learning~\citep{BecerraBonacheDT06}.

\citet{EmamjomehZadehKMS20} subsequently
considered cases when the combinatorial structure itself can evolve over time -- as they noted, some of these
settings resembled earlier work on shifting bandits~\citep{BousquetW02}.
These dynamic settings are also where our results lie, and among
our results, we generalize the work of
\citet{EmamjomehZadehKMS20} and solve some of their 
open problems herein.

\section{Preliminaries} 
\citet{EmamjomehZadehK17} first introduced a static graph model for robust interactive learning, where there is one fixed concept in the concept class, and the learner is trying to learn under noisy feedback.
Later \citet{EmamjomehZadehKMS20} extended the model to dynamic interactive learning, where the target concept can change during learning. Our work is based on the same framework, so we will briefly describe previously defined models and results here.
\subsection{Static model} 
For clarity we first state the static learning model from \citet{EmamjomehZadehK17}, as it is a foundation for later work on dynamic models. 
\begin{definition}[Feedback graph~\citep{EmamjomehZadehK17}]
Define a weighted (directed or undirected) graph $G = (V,E, w)$, where
the vertices represent a set of $n = |V|$ candidate concepts. The edge set $E$ captures all possible corrections a learner can receive: edge $(s,s')$ exists if the user is allowed to propose $s'$ in response to $s$. The edge weights $w$ are given to the learning algorithm, satisfying a key property: if the learner proposes $s$ and the ground truth is $s^* \ne s$, then every correct user feedback $s'$ lies on a shortest path from $s$ to $s^*$ with respect to edge weight $w$.
\end{definition}

Note that we assume that the weighted graph is
given to the algorithm and faithfully represents the underlying problem.


For an undirected graph $G$, let $N_G(v)$ denote the neighborhood of $v$ in $G$. For a directed graph $G$, let $N^{in}_D(v)$ 
be the
in-neighborhood of $v$ in digraph $D$ (including self loops) and  $N^{out}_D(v)$ be the
out-neighborhood of $v$ in digraph $D$ (including self loops).

In the static model there exists a fixed vertex $t \in V$ that the algorithm is attempting to learn over multiple rounds. In each round the learner proposes a vertex $q \in V$ and receives a feedback vertex $z$. If $q = t$, with probability $1 - p$ the learning algorithm receives feedback $q$ indicating the query is correct, and with probability $p$ it receives a feedback $z$ which is adversarially chosen from $N_G(q)$. If $q \neq t$ the algorithm is given a feedback $z \in N_G(q)$ which is incorrect with probability $p$. Crucially both correct and incorrect feedback is adversarial. As discussed in \citet{EmamjomehZadehK17} this implies learning is only feasible when $p < 1/2$.  

Other important definitions used throughout include the collection of concepts that are consistent with a particular feedback, or the version space for a query-feedback pair, as well as the weighted median of the feedback graph, which can be interpreted as the `center of mass' of the graph.

\begin{definition}[Version space~\citep{EmamjomehZadehK17}] If the learner proposes $q$ and receives feedback $z$, let $S_G(q,z)$ be the collection of concepts (nodes) that are consistent with the feedback. Formally, $S_G(q,z) = \{v \mid z \text{ lies on a shortest weighted path from } q \text{ to }v\}$.
\end{definition}

\begin{definition}[Weighted median~\citep{EmamjomehZadehKS16}] Let $L: V\rightarrow \mathbb{R}^{\ge0}$ be a function that assigns likelihood to every vertex in the feedback graph $G=(V,E,w)$. A weighted median $u$ is a vertex that minimizes $\sum_{v \in V} L(v)\cdot w(u,v)$.
\end{definition}
\citet{EmamjomehZadehK17} presented a multiplicative weight update algorithm, which assigns likelihoods for each vertex in the feedback graph, and repeatedly queries the weighted median, which has the property of halving the total likelihood of its version space each round.

\subsection{Dynamic model}
Our paper is concerned with dynamic interactive learning where the target $t$ is allowed to move. Assume that over the $R$ rounds of learning the target moves at most $B$ times and at round $r$ the target is located at some node $t_r$.  Without further assumption on target evolution, \citet{EmamjomehZadehKMS20} showed the following general mistake upper bound.  For the remainder of the paper, $H(p) = p \log
\frac{1}{p} + (1-p) \log \frac{1}{1-p}$ is the entropy.
\begin{theorem}[\citet{EmamjomehZadehKMS20}]
Assume the total number of rounds $R$ is known beforehand. Let $A = V^R$ be the set of all node sequences of length $R$ and let $a^* = \langle t_1,\ldots,t_R \rangle$ be the sequence of true targets throughout the
$R$ rounds.
Let $\lambda : A \rightarrow \mathbb{R}^{\geq 0}$ be a function that assigns non-negative weights to these sequences, such that $\sum_{a \in A}\lambda(a) \leq 1$. 
There is an online learning algorithm that makes at most $$\frac{1}{1-\mathrm{H}(p)} \cdot \log \frac{1}{\lambda\left(a^{*}\right)}$$ mistakes in expectation.
\end{theorem}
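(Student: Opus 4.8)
The plan is to reduce the dynamic problem to a static-style multiplicative weights argument in which each length-$R$ node sequence $a \in A$ plays the role of a single expert with prior weight $\lambda(a)$. I maintain weights $\lambda_r(a)$ initialized to $\lambda_1 = \lambda$, and at each round $r$ I collapse them to vertex likelihoods $L_r(v) = \sum_{a : a_r = v} \lambda_r(a)$, query the weighted median $q_r$ of $G$ under $L_r$, and receive feedback $z_r$. The update multiplies $\lambda_r(a)$ by $(1-p)$ if the $r$-th coordinate $a_r$ is consistent with $(q_r,z_r)$ and by $p$ otherwise, where ``consistent'' means $a_r \in S_G(q_r,z_r)$ for a correction $z_r \ne q_r$ and $a_r = q_r$ for a confirmation $z_r = q_r$. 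Writing $\Phi_r = \sum_a \lambda_r(a)$, the two quantities I track are $\Phi_r$ and the weight $\lambda_r(a^*)$ of the true sequence, and the single inequality that links them is $\lambda_{R+1}(a^*) \le \Phi_{R+1}$ together with $\Phi_1 = \sum_a \lambda(a) \le 1$.

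The argument rests on two per-round drift facts. First, a \emph{mistake} (a round with $q_r \ne t_r$) always triggers a correction, so the weighted-median property forces $L_r(S_G(q_r,z_r)) \le \tfrac12\Phi_r$, and since $p < \tfrac12$ the update gives $\Phi_{r+1} \le \tfrac12\Phi_r$; hence each mistake contributes at least $1$ to $\log(1/\Phi)$. Second, on any round the coordinate $t_r = a^*_r$ is consistent exactly when the feedback is correct, which happens with probability $1-p$, so the expected multiplicative change of $\lambda_r(a^*)$ in logarithm is $(1-p)\log(1-p) + p\log p = -\mathrm{H}(p)$. I package these into the potential $\Psi_r = \log\bigl(\Phi_r/\lambda_r(a^*)\bigr)$, which satisfies $\Psi_1 \le \log(1/\lambda(a^*))$ and $\Psi_r \ge 0$ throughout. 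On a mistake round the two facts combine to give $\mathbb{E}[\Psi_{r+1} - \Psi_r \mid \text{history}] \le -\bigl(1-\mathrm{H}(p)\bigr)$.

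Summing this drift over all rounds and telescoping, the key is that whether round $r$ is a mistake is determined by $q_r$ and $t_r$ before the feedback is drawn, so I may treat $\Psi_r$ as a supermartingale that drops by $1-\mathrm{H}(p)$ in expectation per mistake. Taking total expectations then yields $0 \le \mathbb{E}[\Psi_{R+1}] \le \Psi_1 - (1-\mathrm{H}(p))\,\mathbb{E}[M] \le \log(1/\lambda(a^*)) - (1-\mathrm{H}(p))\,\mathbb{E}[M]$, where $M$ is the number of mistakes, and this rearranges to the claimed bound $\mathbb{E}[M] \le \frac{1}{1-\mathrm{H}(p)}\log\frac{1}{\lambda(a^*)}$.

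The step I expect to be the real obstacle is controlling the \emph{non-mistake} rounds, i.e.\ showing $\mathbb{E}[\Psi_{r+1}-\Psi_r \mid \text{history}] \le 0$ when $q_r = t_r$. Here a confirmation ($z_r = q_r$, probability $1-p$) concentrates weight onto $q_r$ and decreases $\Psi$, but an adversarial correction ($z_r \ne q_r$, probability $p$) makes $t_r$ itself inconsistent and, taken alone, can \emph{increase} $\Psi$ by as much as $\log\frac{1}{2p}$. The delicate part is verifying that the confirmation gain always dominates the adversarial loss in expectation; this uses $p < \tfrac12$, the median bound $L_r(S_G(q_r,z_r)) \le \tfrac12\Phi_r$, and the fact that the adversary's reachable set $S_G(q_r,z_r)$ excludes the heavily weighted vertex $t_r$, so that the worst-case correction is simultaneously constrained by how much mass actually sits outside $t_r$. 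Making this trade-off precise---rather than the comparatively routine mistake-round and telescoping steps---is where the analysis must be done carefully.
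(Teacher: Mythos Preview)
The present paper does not prove this theorem; it is quoted from \citet{EmamjomehZadehKMS20} as background in Section~2.2, so there is no in-paper argument to compare against. That said, your plan is precisely the argument the cited source uses: treat each sequence $a\in V^R$ as an expert with prior $\lambda(a)$, marginalize to vertex likelihoods, query the weighted median, apply the $(1-p)/p$ multiplicative update, and track the potential $\Psi_r=\log\bigl(\Phi_r/\lambda_r(a^*)\bigr)$.

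You have correctly located the only nontrivial step---the non-mistake rounds---and you are right that the required inequality holds, but you have not closed it. Writing $\alpha=L_r(q_r)/\Phi_r$ and letting $\beta=L_r(S_G(q_r,z_r))/\Phi_r$ be the consistent fraction under the adversary's erroneous feedback, the median property together with $q_r\notin S_G(q_r,z_r)$ forces $\beta\le\min\{\tfrac12,\,1-\alpha\}$, and the expected drift of $\Psi$ on a correct round is
\[
(1-p)\log\frac{p+(1-2p)\alpha}{1-p}\;+\;p\log\frac{p+(1-2p)\beta}{p},
\]
which one verifies is $\le 0$ for all admissible $\alpha,\beta$ and all $p<\tfrac12$ (split on $\alpha\gtrless\tfrac12$ and use concavity of $\log$); this is essentially what the present paper later invokes as ``Lemma~6 from \citet{EmamjomehZadehKMS20}.'' One small imprecision in your mistake-round analysis: $t_r$ is consistent with probability \emph{at least} $1-p$, not exactly $1-p$, since the adversary's noisy feedback may happen to be consistent with $t_r$; but this only strengthens the bound $\mathbb{E}[\Delta\log\lambda_r(a^*)]\ge -\mathrm{H}(p)$, so your conclusion is unaffected.
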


While this is a positive result for the mistake bound, it does not guarantee an efficient algorithm as it has to keep track of weights for all sequences $V^R$, and in the worst case the number of sequences is $O(n^R)$ where $n=|V|$. Relatively efficient implementations exist for the following two models without explicitly constructing the $\lambda$ map for each sequence. 

\subsubsection{Shifting target}
In the Shifting Target model there exists an unknown subset of vertices $S \subseteq V$ where $|S| \leq k$, and the learner knows $k$. Target transition is restricted within $S$, viz. $t_r \in S$ for every round $r$. Previous work by \citet{EmamjomehZadehKMS20} proved the following theorem.

\begin{theorem}[\citet{EmamjomehZadehKMS20}]
\label{shiftold}
Under the Shifting Target model, there is a deterministic algorithm that runs in time $O(n^k poly(n))$ and makes at most
\begin{align*}
    \frac{1}{1-\mathrm{H}(p)} \cdot(k \log n+(B+1) \log k+R \cdot \mathrm{H}(B / R))
\end{align*}
mistakes in expectation.
Furthermore, there exists a graph such that every algorithm makes at least
\[
\min \Big\{\frac{1}{1-\mathrm{H}(p)} \cdot[k \log n+(B-2 k+1) \cdot(\log k)]  
    -o(\log n)-B \cdot o(\log k) \, , \, R-o(R) \Big\}
\]
mistakes in expectation.
\end{theorem}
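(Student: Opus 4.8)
The plan is to treat the upper and lower bounds separately. For the upper bound I would reduce to the general mistake bound stated in the first theorem above, which guarantees at most $\frac{1}{1-H(p)}\log\frac{1}{\lambda(a^*)}$ mistakes for any weight function $\lambda$ with $\sum_{a}\lambda(a)\le 1$. The task then becomes purely combinatorial: design $\lambda$ so that $\log\frac{1}{\lambda(a^*)}$ is small on exactly the sequences the Shifting Target model permits. I would build $\lambda$ as a product of three independent choices that together reconstruct such a sequence: first pick the support set $S$ uniformly among the $\binom{n}{k}$ size-$k$ subsets; second pick the set of rounds at which the target switches, of which there are at most $B$, charging $2^{R\,H(B/R)}$ via the standard bound $\sum_{b\le B}\binom{R}{b}\le 2^{R\,H(B/R)}$; and third, for each of the at most $B+1$ resulting segments, pick the active target uniformly among the $k$ elements of $S$. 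Setting $\lambda(a)$ equal to the resulting product probability (and $0$ on sequences not realizable this way) makes $\sum_a\lambda(a)\le 1$ automatic and yields
\[
\log\frac{1}{\lambda(a^*)}\le \log\binom{n}{k}+R\,H(B/R)+(B+1)\log k\le k\log n+R\,H(B/R)+(B+1)\log k,
\]
which is exactly the claimed bound after multiplying by $\frac{1}{1-H(p)}$.

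For the running time I would avoid ever materializing the $n^R$ weights. Since every admissible sequence lives in $S^R$ for one of the $\binom{n}{k}=O(n^k)$ candidate supports, I would run the weighted-median update of \citet{EmamjomehZadehK17} while maintaining, for each candidate $S$, a compact summary of the posterior over the current target. A forward, HMM-style dynamic program over rounds suffices: the state is the current target in $S$ together with the number of switches used so far, giving $O(kB)$ states per support, each updated in $poly(n)$ time per round. Summing over supports yields the stated $O(n^k\,poly(n))$ bound, and the weighted median queried at each round is read off from the aggregated posterior mass on each vertex.

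The lower bound is the harder half and is where I expect the real difficulty. The plan is information-theoretic: exhibit a graph together with a distribution over hard instances — a random support $S$ of size $k$ and a random switching sequence with at most $B$ switches inside $S$ — whose entropy is about $k\log n+(B-2k+1)\log k$, and argue no algorithm can extract this information quickly. The graph should be chosen so that, as in the static construction of \citet{EmamjomehZadehK17}, each node is essentially indistinguishable from its $n$ alternatives until queried, so pinning down one target costs $\approx\log n$ bits while, once $S$ is known, distinguishing among its $k$ members costs $\approx\log k$ bits per switch; the $-2k\log k$ slack arises from charging the first appearances of the $k$ support elements to the $k\log n$ ``learn-$S$'' budget rather than double-counting them. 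The crux is the noise accounting: because each round's feedback is corrupted with probability $p$, a single mistaken round conveys at most $1-H(p)$ bits in expectation, so Fano's inequality (or the direct version-space potential argument underlying the static bound) forces the expected number of mistakes to be at least the instance entropy divided by $1-H(p)$, minus lower-order $o(\log n)$ and $B\cdot o(\log k)$ terms that absorb the entropy approximation and boundary effects. Finally, since at most one mistake occurs per round, the bound is capped at $R-o(R)$, which is reflected by the outer $\min$ and dominates when $B$ is a constant fraction of $R$. The main obstacle is making the per-mistake information ceiling of $1-H(p)$ rigorous simultaneously with the precise lower-order terms forced by the graph construction.
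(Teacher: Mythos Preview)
This theorem is quoted from \citet{EmamjomehZadehKMS20} as background; the present paper does not supply its own proof of either direction, so strictly speaking there is no proof here to compare your proposal against. What the paper \emph{does} do is re-derive the upper bound as a corollary of its unified framework: it encodes the Shifting Target model as a transition graph $G'$ consisting of $\binom{n}{k}$ disjoint $k$-cliques, reads off $n'=\binom{n}{k}\cdot k$ and $\Delta'=k$, and plugs these into Theorem~\ref{genthm}. Your direct construction of $\lambda$ as the product of three independent choices (support set, switch times, segment targets) is exactly the weight function that this transition-graph encoding produces implicitly, so on the upper bound your argument and the paper's re-derivation are the same proof in different clothing. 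Your runtime sketch --- one forward DP per candidate support $S$ --- is likewise consistent with the $O(k^2\cdot n^k)$ running time the paper obtains in the same corollary.

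For the lower bound the paper offers nothing at all to compare to; it is simply attributed to the prior work and then superseded in Section~4 by a different argument (a reduction from noisy binary search) that targets the Drifting model rather than the Shifting one. Your outline --- a random hard instance with entropy roughly $k\log n+(B-2k+1)\log k$ together with a Fano/capacity argument that each mistake yields at most $1-H(p)$ bits --- is the right shape for how such bounds are typically proved, but as you yourself flag, converting the per-round channel capacity into a per-\emph{mistake} information ceiling and accounting precisely for the $-2k\log k$ slack is where the real work lies, and nothing in this paper will help you fill that in.
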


Note in particular the exponential dependence on $k$ in the runtime of the algorithm.

\subsubsection{Drifting target}
In the Drifting Target model, it is assumed that the target slowly evolves over time, and the evolution can be modeled by following the edges of some transition graph defined below.
\begin{definition}[Transition graph~\citep{EmamjomehZadehKMS20}] There exists a known unweighted digraph $G' = (V, E')$, where in particular $G'$ and $G$ have the same vertex set but their edge sets can be different. The target is only allowed to move along edges of $G'$. Formally for every round $r$ the model requires $t_{r+1} \in \{t_r\} \cup N^{out}_{G'}(t_r)$. Let $\Delta$ be the maximum degree in $G'$.
\end{definition}
Previous work by \citet{EmamjomehZadehKMS20} proved the following theorem.

\begin{theorem}[\citet{EmamjomehZadehKMS20}]
\label{driftold}
Under the Drifting Target model, there is a deterministic algorithm that runs in time $poly(n)$ and makes at most $\frac{1}{1-\mathrm{H}(p)} \cdot(\log n+B \cdot \log \Delta+R \cdot \mathrm{H}(B / R))$ mistakes in expectation.. 

Furthermore, there exists a graph such that every algorithm makes at least 
\[    \min \Big\{R-o(R), \frac{1}{1-\mathrm{H}(p)} \cdot\left(\log n + B \log\Delta\right) -o(\log n)-B \cdot o(\log \Delta) \Big\}
\]
mistakes in expectation.
\end{theorem}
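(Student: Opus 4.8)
The plan is to instantiate the general weighting theorem by choosing the sequence-weight function $\lambda$ so that it reflects the drifting dynamics. I would define $\lambda$ as the law of a Markov chain on $V$: draw $t_1$ uniformly, so $\lambda$ assigns mass $1/n$ to the start; then at each of the $R-1$ subsequent rounds independently ``stay'' with probability $1 - B/R$ and ``move'' with probability $B/R$, and when moving pick the next vertex uniformly among the at most $\Delta$ out-neighbors $N^{out}_{G'}(t_r)$. Since this is a genuine probability distribution over $V^R$, the requirement $\sum_{a}\lambda(a)\le 1$ holds automatically. For the true sequence $a^*$, which follows $G'$-edges and moves at most $B$ times, each of its $B$ moves contributes a factor at least $(B/R)/\Delta$ and each of the $R-B$ stays a factor $1-B/R$, so
$$\log\frac{1}{\lambda(a^*)} \le \log n + B\log\Delta + R\cdot\mathrm{H}(B/R).$$
Substituting into the general mistake bound $\frac{1}{1-\mathrm{H}(p)}\log\frac{1}{\lambda(a^*)}$ yields exactly the claimed bound; using $B$ rather than the true move count is harmless because $\mathrm{H}$ is increasing on the relevant range.

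\textbf{Efficiency.} The remaining task is to run this in $poly(n)$ time rather than tracking $n^R$ explicit weights. The key observation is that the Markovian structure of $\lambda$ lets the posterior weight over all sequences-so-far be summarized by a single weight per vertex, exactly as in the forward algorithm for hidden Markov models. The learner keeps a likelihood vector $L_r$ over $V$, queries its weighted median (computable in $poly(n)$, and contracting the total likelihood by a constant factor by the median property), applies the multiplicative version-space update that reweights each vertex by its consistency with the feedback $z$ via $S_G(q,z)$, and then advances to round $r+1$ through the linear transition $L_{r+1}(v) \propto (1-B/R)L_r(v) + (B/R)\sum_{u\,:\,v\in N^{out}_{G'}(u)} L_r(u)/\deg^{out}(u)$. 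Each step costs $poly(n)$, and the transition touches only $O(|E'|)=O(n\Delta)$ edges, so the total running time is $poly(n)$.

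\textbf{Lower bound.} For the lower bound I would construct an instance on $n$ vertices with a feedback graph $G$ and a transition graph $G'$ of maximum degree $\Delta$, together with a prior over target sequences, and argue by entropy accounting. The feedback graph is built so that identifying the initial target among $n$ candidates requires resolving $\log n$ bits of binary feedback, while at each scheduled move the target may jump to any of $\Delta$ out-neighbors in $G'$, resolving which one requiring $\log\Delta$ bits. I place a uniform prior on $t_1$ and, at each of the $B$ moves (whose timing is fixed and known to the learner, which is why no $R\cdot\mathrm{H}(B/R)$ term appears), a uniform choice among the $\Delta$ neighbors, so the target sequence carries entropy $\log n + B\log\Delta$. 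The core lemma is that against worst-case (adversarial) corruption, each round in which the learner errs reveals at most $1-\mathrm{H}(p)$ bits about the target sequence, since the feedback is correct with probability $1-p$ and adversarially set with probability $p$. Because the learner must drive the residual uncertainty essentially to zero to stop erring, the expected number of mistakes is at least $\frac{\log n + B\log\Delta}{1-\mathrm{H}(p)}$ up to the stated $o(\log n)$ and $B\cdot o(\log\Delta)$ slack; the outer $\min$ with $R-o(R)$ covers the regime where this quantity exceeds the trivial ceiling of $R$ mistakes.

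\textbf{Main obstacle.} On the upper-bound side the only subtlety is the efficient implementation, namely verifying that the forward/HMM-style recursion reproduces precisely the weights the general algorithm assigns and that the median step composes correctly with the transition step; the mistake bound itself is immediate once $\lambda$ is fixed. The harder part is the lower bound. The delicate points there are (a) designing a single graph in which the $\log n$ initial-identification term and the $B\log\Delta$ per-move terms are achieved simultaneously while respecting the degree-$\Delta$ constraint on $G'$, and (b) making the per-mistake information bound of $1-\mathrm{H}(p)$ rigorous under adversarial rather than merely stochastic noise, including correctly charging rounds with correct queries and accumulating the bound across the $B$ move-gadgets to extract the lower-order error terms.
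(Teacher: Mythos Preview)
This theorem is stated in the paper only as a cited result from Emamjomeh-Zadeh et al.\ (2020) and is not proved there; it appears in the Preliminaries as background. So there is no proof in the present paper to compare against directly.

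That said, the paper re-derives the upper bound as a corollary of its general framework (Algorithm~\ref{generalalgo} and Theorem~\ref{genthm}), and the mechanism is exactly what you describe: a Markov-chain prior $\lambda$ over sequences, the bound $\log(1/\lambda(a^*))\le\log n+B\log\Delta+R\cdot\mathrm{H}(B/R)$, and an efficient per-vertex likelihood/transition update in place of tracking all $n^R$ sequence weights. On the upper-bound side your proposal and the paper coincide.

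For the lower bound, the paper does not reprove the bound in Theorem~\ref{driftold}. Instead, Section~4 establishes a \emph{stronger} lower bound (including the additional $R\cdot\mathrm{H}(B/R)$ term) by a different technique: a reduction from noisy binary search on $m=n\cdot\Delta'^{B}\cdot\binom{R}{B}$ items, each encoded as a length-$R$ digit sequence over $[n]$, with the feedback graph taken to be a path. Your entropy-accounting sketch is plausible in spirit but is not the paper's route, and the obstacles you yourself flag under (b) --- making the per-mistake $1-\mathrm{H}(p)$ information bound rigorous against adversarial noise and accumulating it across gadgets --- are precisely what the reduction sidesteps by invoking the known noisy-binary-search lower bound (Theorem~\ref{noisybinarysearch}) as a black box.
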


Notice that for both Shifting Target and Drifting Target models, there is a gap of $R\cdot H(B/R)$ between the mistake upper bound and lower bound, which remains an open problem in \citet{EmamjomehZadehKMS20}. We will show a new lower bound and close the gap in Section 4.

\section{A unified model}
Our first contribution is to define a more generalized model inspired by the original Drifting Target model, and show that the results from Theorems \ref{shiftold} and \ref{driftold} are both valid under this generalization, thus unifying previous models. The technique used in \citet{EmamjomehZadehKMS20} to efficiently implement the Shifting Target model was to keep track of likelihoods for each subset of nodes instead of each sequence, reducing computational complexity from $O(n^R)$ to $O(n^k)$. Similarly, their efficient implementation for the Drifting Target model keeps track of likelihoods for each node, reducing computational complexity to $O(poly(n))$.   This method requires customization for each transition model, and can become tricky as the models become more complicated. A key motivation for a general model is that it unifies a wide class of transition models, and allows us to easily obtain mistake upper bounds and runtime guarantees based on a single algorithm.

As above let $G = (V, E, w)$ be the graph representing the candidate models (the feedback graph), and $G'=(V', E', \pi)$ be a directed transition graph, representing all possible ways the target might change over time. The key difference is that for each vertex $i \in V$, $V'$ contains possibly duplicated vertices corresponding to the same vertex $i$, denoted by $V'_i := \{u \in V' \mid u $ corresponds to $ i \in V\}$. Define $n' = |V'|$, $\Delta'$ as the max degree of $G'$, and $\pi_{ij}$ as the transition probability in $G'$, where $\pi_{ii} = (1-b)$ and $\pi_{ij} = \pi_{\text{out}} :=\frac{b}{\Delta'}$ for $i \neq j$ under a uniform transition assumption. 

\if{false}
Some key parameters and notations:
\begin{itemize}
  \item Models can change for a maximum of $B$ times over the course of $R$ rounds. Alternatively, at each round the target can change with probability at most $b=\frac{B}{R}.$
  \item $n = |V|,\ n' = |V'|$
  \item $\Delta$ and $\Delta'$ are the max degrees of $G$ and $G'$ respectively. Alternatively let $\pi_{ij}$ denote transition probability in $G'$ where $\pi_{ii} = (1-b)$ and $\pi_{ij} \geq \pi_{\text{out}} :=\frac{b}{\Delta'} $ for $i \neq j$. In the case that neighbors are chosen uniformly, $\pi_{ij} = \frac{b}{\Delta'}$.
  \item $\mathcal{L}_r(i)$ and $L_r(u)$ represent likelihoods of vertices in $G$ and $G'$ in the $r^\text{th}$ round, and $\mathcal{L}_r$ is aggregated from $L_r$ for each $i\in V$: $\mathcal{L}_r(i)=\sum_{u\in V'_i}{L_r(u)}$.
\end{itemize}
\fi

We present a modified version of the algorithm from \citet{EmamjomehZadehKMS20}. In the $r^\text{th}$ round, we keep track of the likelihoods for each vertex $u \in V'$ as $L'_r(u)$, and likelihoods $L_r(i)$ for each vertex $i\in V$ is aggregated from $L'_r$ as the summation over all of $i$'s duplicates in $V'$. The median of the feedback graph $G$ is then calculated based on $L_r$. We update the likelihoods $L'_{r+1}$ for all corresponding nodes in $G'$ based on each node's consistency with the feedback using the same rules as \citet{EmamjomehZadehKMS20}.

\begin{algorithm}[tb]
\caption{Interactive learning likelihood update}\label{generalalgo}
\begin{algorithmic}
\STATE Initialize $L'_1(u)\ \forall u \in V'$
\FOR{$1\leq r \leq R$}
    \STATE $\forall i\in V: L_r(i) \gets \sum_{u\in V'_i}{L'_r(u)}$ \COMMENT{Aggregate $L_r$ from $L'_r$}
    \STATE $q_{r} \gets \arg\min_{i\in V} \sum_{j\in V}{L_{r}(j)}\cdot w(i,j)$ \COMMENT{Query the weighted median}
    \STATE $z_r \gets$ feedback from adversary
    \STATE $\forall i\in V\text{and}\ \forall u\in V'_i:$
    \STATE $\mathrm{\ \ \ \ } P(u)= P(i)\gets (1-p)\cdot \mathds{1} i\in S_G(q_r,z_r)]+p\cdot \mathds{1}[i\not\in S_G(q_r,z_r)]$ \COMMENT{Weight update}
    \STATE $\forall u\in V': L'_{r+1}(u) = \sum_{v \in N^{in}_{G'}(u)}{P(v)\cdot L'_r(v)\cdot \pi_{vu}}$ \COMMENT{Transition}
\ENDFOR
\end{algorithmic}
\end{algorithm}

\begin{theorem}
\label{genthm}
Assuming the first target is chosen uniformly at random from $V'$, Algorithm \ref{generalalgo} runs in time
$O(\Delta' \cdot n'+poly(n))$, uses space $O(n')$, and has query complexity  $$\frac{1}{1-H(1-p)}\cdot \Big(\log{n'}+B\cdot\log{\Delta'}+R\cdot H(B/R)\Big).$$

Alternatively writing the bound using transition probabilities instead of maximum degree, Algorithm \ref{generalalgo} runs in time
$O\left(\frac{1}{n'}\cdot \pi_{\text{out}} ^B\cdot(1-b)^{R-B}\right)$, uses space $O(n')$, and has query complexity  $$\frac{1}{1-H(1-p)}\cdot \Big(\log{n'}+B\cdot\log({b}/{\pi_{\text{out}}})+R\cdot H(b)\Big).$$

\if{false}
Assuming the first target is chosen uniformly random from $V'$, the likelihood of the ground truth sequence is at least: 
$$\lambda(a^*)=\frac{1}{n'\cdot \Delta'^{B}\cdot \binom{R}{B}} = \frac{1}{n'}\cdot \pi_{\text{out}} ^B\cdot(1-b)^{R-B}$$

According to Theorem 5 from \citet{EmamjomehZadehKMS20}, query complexity upper bound is:
\begin{align*}
\frac{1}{1-H(1-p)}\cdot &\Big(\log{n'}+B\cdot\log{\Delta'}+R\cdot H(B/R)\Big) \\ =
\frac{1}{1-H(1-p)}\cdot &\Big(\log{n'}+B\cdot\log{\frac{b}{\pi_{\text{out}}}}+R\cdot H(b)\Big)
\end{align*}

Time complexity is
$O(\Delta' \cdot n'+poly(n)) = O(\frac{1}{\pi_{\text{out}}} \cdot n'+poly(n))$.

Space complexity is $O(n')$.
\fi
\end{theorem}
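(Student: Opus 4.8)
The plan is to \emph{reduce} the analysis of Algorithm~\ref{generalalgo} to the general sequence-weighting upper bound of \citet{EmamjomehZadehKMS20} (the first theorem quoted above), rather than rebuilding a potential-function argument from scratch. The idea is that the per-vertex likelihoods $L'_r$ maintained by the algorithm are the marginals, over all length-$r$ trajectories in $G'$, of a single prior $\lambda$ on sequences $a\in V^R$: namely the law of the Markov chain that starts uniformly on $V'$ and at each step either stays (probability $1-b$) or moves to a uniformly chosen out-neighbor (probability $\pi_{\text{out}}=b/\Delta'$ each). Since $(1-b)+\Delta'\cdot\pi_{\text{out}}=1$, the product weights $\lambda(a)=\tfrac{1}{n'}\prod_{r}\pi_{t_r t_{r+1}}$ form a genuine probability distribution on trajectories, so $\sum_a\lambda(a)\le 1$ and the hypotheses of the general theorem are met. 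Under this identification, querying the weighted median of the aggregated likelihoods $L_r$ and halving the consistent mass each round is exactly the instantiation of the general algorithm to this $\lambda$, so its expected mistake bound transfers verbatim.

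The crux is correctness of this identification. I would prove by induction on $r$ that, for every $u\in V'$, $L'_r(u)$ equals the total $\lambda$-weight of all length-$r$ prefixes ending at $u$ that are consistent with the feedback $z_1,\dots,z_{r-1}$, and hence that the aggregate $L_r(i)=\sum_{u\in V'_i}L'_r(u)$ is the total weight of consistent prefixes whose current target is $i$. The base case is the uniform initialization. For the step, the multiplicative factor $P(v)\in\{1-p,p\}$ records consistency of $v$ with the newest feedback exactly as in the static analysis of \citet{EmamjomehZadehK17}, while the transition line $L'_{r+1}(u)=\sum_{v\in N^{in}_{G'}(u)}P(v)\,L'_r(v)\,\pi_{vu}$ is precisely the forward (sum-product) recursion that pushes these prefix weights one step along $G'$. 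Establishing that this \emph{local} message passing reproduces the \emph{global} sum over exponentially many sequences is the main obstacle, and it is exactly what lets the algorithm run in $\mathrm{poly}$ space while matching the sequence-weighting bound; everything downstream is bookkeeping.

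With the reduction in hand I would lower-bound the weight of the true trajectory $a^*=\langle t_1,\dots,t_R\rangle$. Because the target moves at most $B$ times and each move lands on an out-neighbor of $G'$, at most $B$ of the factors equal $\pi_{\text{out}}$ and the remaining factors equal $1-b\ (\ge\pi_{\text{out}})$, giving $\lambda(a^*)\ge\tfrac{1}{n'}(1-b)^{R-B}\pi_{\text{out}}^{\,B}$. Taking $-\log$ and using the entropy identity $R\,H(b)=B\log\tfrac1b+(R-B)\log\tfrac{1}{1-b}$ rewrites $\log\tfrac{1}{\lambda(a^*)}$ as $\log n'+B\log(b/\pi_{\text{out}})+R\,H(b)$, which becomes $\log n'+B\log\Delta'+R\,H(B/R)$ after substituting $\pi_{\text{out}}=b/\Delta'$ and $b=B/R$; the clean combinatorial form $\lambda(a^*)\approx(n'\,\Delta'^{B}\binom{R}{B})^{-1}$ follows from $\log\binom{R}{B}\approx R\,H(B/R)$. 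Substituting into the general bound, and using the symmetry $H(p)=H(1-p)$, yields the stated query complexity in both of its forms.

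Finally, I would read the resource bounds directly off the loop body. Each round stores only $L'_r$ and the derived $L_r$, so the space is $O(n')$. Within a round, the median is a $\mathrm{poly}(n)$ optimization over $V\times V$, the aggregation and reweighting cost $O(n')$, and the transition update touches each edge of $G'$ exactly once, for a total of $O(\Delta'\cdot n')$; hence the per-round running time is $O(\Delta'\cdot n'+\mathrm{poly}(n))$, which re-expresses as $O(\tfrac{1}{\pi_{\text{out}}}\,n'+\mathrm{poly}(n))$ via $\Delta'=b/\pi_{\text{out}}$ (with $b\le 1$).
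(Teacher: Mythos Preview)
Your proposal is correct and follows essentially the same approach as the paper: both lower-bound the weight $\lambda(a^*)$ of the ground-truth trajectory under the Markov-chain prior on $V'$ and substitute into the general sequence-weighting theorem of \citet{EmamjomehZadehKMS20}, then read the per-round time and space directly off the loop body. Your inductive argument that the per-vertex likelihoods $L'_r$ are exactly the consistent-prefix marginals of this prior---and hence that Algorithm~\ref{generalalgo} genuinely implements the sequence-weighting scheme---is a welcome piece of rigor that the paper's proof leaves implicit.
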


\begin{proof}
We wish to show that the likelihood of the ground truth sequence $a^*$ is at least $$\lambda(a^*)=\frac{1}{n'\cdot \Delta'^{B}\cdot \binom{R}{B}},$$
or alternatively $$\lambda(a^*) = \frac{1}{n'}\cdot \pi_{\text{out}} ^B\cdot(1-b)^{R-B}.$$
Note that these two expressions correspond to two equivalent interpretations of the transition model: 1, the target changes at most $B$ times during $R$ rounds; 2, the target changes with probability at most $b = B/R$ at each round. 

For the first interpretation, we provide the expression for $\lambda(a^*)$ with a combinatorics argument: there are $n'$ choices for the first node, and the next node differs from the previous node at most $B$ times, each time with $\Delta'$ choices, and these changes can occur at $\binom{R}{B}$ locations in the sequence. Thus the total number of valid sequences is $n'\cdot \Delta'^{B}\cdot \binom{R}{B}$. The initial likelihoods are assigned uniformly among all sequences, so dividing $1$ by the total number of sequences gives us $\lambda(a^*)$.

For the second interpretation, we have a probability argument: the sequence starts with any particular node in the transition graph with probability $\frac{1}{n'}$, and the next node changes to one of the neighbors with probability $\pi_{\text{out}}=\frac{b}{\Delta'}$ for $B$ times, and stays the same with probability $1-b$ for $R-B$ times. Taking the product of these probabilities gives the result.

The bound on query complexity follows by substituting $\lambda(a^*)$ into Theorem 5 of \citet{EmamjomehZadehKMS20}. Note that for large $R$ we approximate $\binom{R}{B}$ by $\left(\frac{R}{B}\right)^B\cdot \big(\frac{R}{R-B}\big)^{(R-B)}$, which contributes to the term $R\cdot H(B/R) = R\log R - B \log B - (R-B)\log(R-B)$ after taking logarithm. 

For algorithmic complexity, steps 3 and 6 both take time $O(n')$, step 4 takes time $O(n^3)$, and step 7 takes time $O(\Delta'\cdot n')$, and $L_r, L'_r$ takes space $n$ and $n'$ respectively. 
\end{proof}

Under our generalized model, any dynamic interactive learning problem can be reduced to defining the feedback graph $G$ to represent the concept class, and defining the transition graph $G'$ to represent the concept evolution. Specifically, the Shifting Target model and Drifting Target model studied in the original paper can be shown as special cases under this general model, and we will show that the general bounds agree with the original results.

\begin{corollary}
In the Drifting Target model, Algorithm \ref{generalalgo} runs in time $O(\Delta \cdot poly(n))$, uses space $O(n)$, and makes at most $$\frac{1}{1-H(1-p)}\cdot\Big(\log{n}+B\cdot\log{\Delta}+R\cdot H(B/R)\Big)$$ mistakes in expectation.
\end{corollary}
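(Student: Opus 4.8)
The plan is to exhibit the Drifting Target model as the degenerate instance of the unified model in which the transition graph carries no duplicated vertices, and then simply read off the three quantities $n'$, $\Delta'$, and $\pi_{\text{out}}$ that feed into Theorem \ref{genthm}. First I would set $V'_i = \{i\}$ for every $i \in V$, so that $V' = V$ and hence $n' = |V'| = n$. Under this identification the transition graph $G' = (V', E', \pi)$ of the unified model coincides vertex-for-vertex with the transition digraph $G' = (V, E')$ of the Drifting Target model, so its maximum out-degree satisfies $\Delta' = \Delta$.

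Next I would verify that the two transition laws agree. Under the uniform transition assumption of the unified model we have $\pi_{ii} = 1 - b$ and $\pi_{ij} = \pi_{\text{out}} = b/\Delta'$ for each out-neighbor. In the Drifting Target model the target either remains at $t_r$ or advances to one of its at most $\Delta$ out-neighbors in $G'$, changing at most $B$ times over $R$ rounds, i.e.\ with per-round probability $b = B/R$. Spreading this mass uniformly over the out-neighbors yields staying probability $1 - b$ and per-neighbor probability $b/\Delta$, which is exactly $\pi_{\text{out}}$ once $\Delta' = \Delta$. Hence Algorithm \ref{generalalgo} instantiated on this input is precisely the Drifting Target learner, and its guarantees are inherited directly from Theorem \ref{genthm}.

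It then remains to substitute $n' = n$ and $\Delta' = \Delta$ into the conclusions of Theorem \ref{genthm}. The query-complexity bound becomes $\frac{1}{1-H(1-p)}\big(\log n + B \log \Delta + R \cdot H(B/R)\big)$, which is the claimed mistake bound after recalling that the binary entropy is symmetric, $H(p) = H(1-p)$, so this also agrees with the statement of Theorem \ref{driftold}. For the running time, $O(\Delta' \cdot n' + poly(n)) = O(\Delta \cdot n + poly(n)) = O(\Delta \cdot poly(n))$, and the space bound $O(n') = O(n)$ follows immediately.

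The argument is essentially a bookkeeping reduction, so there is no deep obstacle; the one point requiring care is confirming that the Drifting Target constraint $t_{r+1} \in \{t_r\} \cup N^{out}_{G'}(t_r)$ genuinely needs no vertex duplication. The duplication mechanism of the unified model exists to encode richer evolution patterns (as in the Shifting Target case, where a single concept may play different roles across admissible target sets); for Drifting Target the identity embedding $V'_i = \{i\}$ suffices, which is precisely why the $n'$ in the general bound collapses to $n$ rather than inflating to something larger.
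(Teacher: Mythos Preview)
Your proposal is correct and follows essentially the same approach as the paper: identify the Drifting Target model as the instance of the unified model with no vertex duplication, read off $n'=n$ and $\Delta'=\Delta$, and substitute into Theorem~\ref{genthm}. Your write-up is actually more careful than the paper's one-line proof (which somewhat loosely says ``$G'$ is the same as the feedback graph $G$'' when it really only needs the vertex sets to coincide), but the argument is the same.
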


\begin{proof}
The transition graph $G'$ is the same as the feedback graph $G$, and transition probability is assumed to be uniform. Thus $n' =n$, and $\Delta'=\Delta$. Plugging into Theorem \ref{genthm} gives the result.
\end{proof}

\begin{corollary}
In the Shifting Target model, Algorithm \ref{generalalgo} runs in time $O(k^2\cdot n^k)$, uses space $O(k\cdot n^k)$, and makes at most $$\frac{1}{1-H(1-p)}\cdot\Big(k\cdot \log{n}+(B+1)\cdot\log{k}+R\cdot H(B/R)\Big)$$ mistakes in expectation.
\end{corollary}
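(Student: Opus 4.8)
The plan is to realize the Shifting Target model as a special case of the unified model by choosing the transition graph $G'$ appropriately and then invoking Theorem~\ref{genthm} with the resulting parameters $n'$ and $\Delta'$. The feedback graph $G$ stays exactly the concept-class graph. For the transition graph, the key observation is that a fixed but unknown $S\subseteq V$ with $|S|\le k$ inside which the target may jump arbitrarily is precisely a clique (with self-loops) on the $k$ vertices of $S$; and since $S$ is fixed for the whole run, cliques belonging to different candidate subsets must be disconnected. So I would build $G'$ as the disjoint union, over all $\binom{n}{k}$ subsets $S$ of size $k$, of a $k$-clique whose $k$ vertices are the duplicates in $V'$ of the elements of $S$. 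A subset of size $k'<k$ is contained in some size-$k$ set, so its clique already permits the required motion and need not be added separately.

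Next I would read off the parameters. Each vertex of $V$ is duplicated once per size-$k$ subset that contains it, so $n'=k\binom{n}{k}=O(k\,n^k)$ using $\binom{n}{k}\le n^k$. Within a $k$-clique each vertex has $k-1$ non-self out-neighbors, so the relevant maximum degree is $\Delta'=k-1\le k$ and $\pi_{\mathrm{out}}=b/\Delta'$ makes the per-round distribution valid. Substituting into Theorem~\ref{genthm}, the mistake bound becomes
$$\frac{1}{1-H(1-p)}\Big(\log n' + B\log\Delta' + R\,H(B/R)\Big) \le \frac{1}{1-H(1-p)}\Big(k\log n + (B+1)\log k + R\,H(B/R)\Big),$$
where I used $\log n'\le k\log n+\log k$ and $\log\Delta'\le\log k$. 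The running time $O(\Delta'\cdot n'+\mathrm{poly}(n))=O(k^2 n^k)$ and the space $O(n')=O(k\,n^k)$ follow from the same substitution.

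Finally I would verify that the reduction is faithful: the true sequence $\langle t_1,\dots,t_R\rangle$ lives inside the clique of any size-$k$ set $T\supseteq S^*$, changing at most $B$ times, hence it is a realizable path of $G'$ and attains likelihood at least $\lambda(a^*)=\tfrac{1}{n'\,\Delta'^{B}\,\binom{R}{B}}$, exactly the quantity fed into Theorem~\ref{genthm}. I would also confirm that the aggregation step $L_r(i)=\sum_{u\in V'_i}L'_r(u)$ correctly collapses the many duplicates of each concept before the weighted-median query, so that the learner's queries are well defined on $G$.

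The step I expect to be the main obstacle is the correctness of the encoding rather than the arithmetic: I must argue that the disconnected-cliques structure genuinely enforces the ``fixed unknown $S$'' constraint (the target can never leave $S^*$) while still permitting unrestricted movement within $S^*$, and that giving each concept exactly one duplicate per size-$k$ superset yields a vertex count whose logarithm reproduces the $k\log n+\log k$ term. Getting these multiplicities right is what makes $\lambda(a^*)$, and therefore the final bound, match the original Shifting Target guarantee of Theorem~\ref{shiftold}.
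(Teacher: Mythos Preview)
Your proposal is correct and follows essentially the same approach as the paper: construct $G'$ as the disjoint union of $\binom{n}{k}$ cliques of size $k$, read off $n'=k\binom{n}{k}$ and $\Delta'\le k$, and substitute into Theorem~\ref{genthm}. The paper's proof is in fact shorter than yours (it states the construction and parameters in two sentences and plugs in), whereas you additionally verify faithfulness of the encoding and the arithmetic of the logarithm; but the underlying idea is identical.
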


\begin{proof}
The transition graph $G'$ consists of $\binom{n}{k}$ disconnected sub-graphs, where each sub-graph is a clique of size $k$, corresponding to a subset of $k$ vertices in $V$. Each round the target might shift within a $k$-clique, and each clique represents a possible choice of the $k$-subset of targets. Thus $n'= \binom{n}{k} \cdot k$ and $\Delta'=k$. Plugging this into Theorem \ref{genthm} gives the result.
\end{proof}

\if{false}
\subsection{Drifting target model as a special case}
In the drifting target model, the transition graph $G'$ is the same as the feedback graph $G$, and transition probability is assumed to be uniform. Thus $n' =n$, and $\Delta'=\Delta$.
Query complexity: $\frac{1}{1-\mathbf{H}(1-p)}\cdot\Big(\log{n}+B\cdot\log{\Delta}+R\cdot\mathbf{H}(B/R)\Big)$.
Time complexity: $O(\Delta \cdot poly(n))$. Space complexity: $O(n)$. 
\subsection{Shifting target model as a special case}
$G'$ consists of $\binom{n}{k}$ disconnected sub-graphs, where each sub-graph is a clique of size $k$, corresponding to a subset of $k$ vertices in $V$. Each round the target might shift within the $k$-clique, and each clique represents a possible choice of the $k$-subset of targets. $n'= \binom{n}{k}  \cdot k$, $\Delta'=k$.
Query complexity: $\frac{1}{1-\mathbf{H}(1-p)}\cdot\Big(k\cdot \log{n}+(B+1)\cdot\log{k}+R\cdot\mathbf{H}(B/R)\Big)$.
Time complexity: $O(k^2\cdot n^k)$. Space complexity: $O(k\cdot n^k)$. 
\fi

Since the query and computational upper bounds mostly depend on the size of transition graph, namely $n'$ and $\Delta'$, minimality of the transition graph is crucial for query and computational efficiency. We want to find the worst case query upper bound, which can be used as a benchmark when modeling various types of transitions. A trivial upper bound on query complexity occurs in the case that the learner does not have any information about how target might change over time, thus the transition graph $G'$ is a complete graph on $n' = n$ vertices, and $\Delta' = n$. Plugging into Theorem \ref{genthm} gives the following result.
\begin{corollary}
\label{trivial_ub}
The worst case query complexity using Algorithm \ref{generalalgo} is
$$\frac{1}{1-H(1-p)}\cdot\Big((B+1)\cdot\log{n}+R\cdot H(B/R)\Big),$$
and runs in time $O(poly(n))$ and space $O(n).$
\end{corollary}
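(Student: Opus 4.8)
The plan is to recognize that Corollary \ref{trivial_ub} is an instantiation of Theorem \ref{genthm} under the most pessimistic modeling assumption, so the entire argument reduces to selecting the right transition graph and performing the substitution. First I would justify the choice of transition graph. When the learner possesses no information about how the target may evolve between rounds, it must permit the target to move from any vertex to any other vertex in a single step; the transition graph $G'$ that encodes this is the complete graph on $V$ together with self-loops. Since there is no structure to exploit, no concept needs to be represented by more than one vertex of $V'$, so each vertex of $V$ corresponds to a single vertex of $V'$, giving $n' = n$ and maximum out-degree $\Delta' = n$.

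Next I would substitute these parameters into the query-complexity bound of Theorem \ref{genthm}. Replacing $n'$ by $n$ and $\Delta'$ by $n$ yields
$$\frac{1}{1-H(1-p)}\cdot \Big(\log n + B \cdot \log n + R \cdot H(B/R)\Big),$$
and combining the first two logarithmic terms as $(B+1)\log n$ recovers the stated expression. For the resource bounds I would plug the same values into the time bound $O(\Delta' \cdot n' + poly(n))$ from Theorem \ref{genthm}, which collapses to $O(n^2 + poly(n)) = O(poly(n))$, while the space bound $O(n')$ becomes $O(n)$.

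The only point that deserves genuine justification, rather than being a routine obstacle, is the claim that this configuration is truly the \emph{worst case}. Here I would appeal to monotonicity of the bound in Theorem \ref{genthm}: the query-complexity expression is nondecreasing in both $n'$ and $\Delta'$. Fixing $n' = n$ (no duplicated vertices), the largest out-degree achievable on $n$ vertices with self-loops is exactly $\Delta' = n$, attained by the complete graph; hence among all transition graphs without duplication the complete graph maximizes the logarithmic contribution $\log n' + B\log\Delta'$. I would remark that duplicating vertices can only inflate $n'$ and therefore the bound, so it never helps the learner, confirming that the unrestricted complete graph represents the genuine worst case. With this monotonicity observation in place, the corollary follows immediately from the substitution, and no further calculation is required.
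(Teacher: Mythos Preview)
Your proposal is correct and follows exactly the approach the paper takes: set $G'$ to be the complete graph on $V$ so that $n'=n$ and $\Delta'=n$, and substitute into Theorem~\ref{genthm}. Your additional monotonicity justification for why this is genuinely the worst case is more than the paper itself provides, but the core argument is identical.
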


In the following sections, we will discuss a few examples of other transition models, showing a hierarchy of query complexity.

\subsection{Shortest path }
\if{false}
\begin{definition}
$C_G(s,t)$ 
The set of vertices reachable via a shortest path from $s$ through $t$ is $C_G(s, t)$
\textbf{should this be "vertices on a shortest path from $s$ through $t$" based on description of shortest path transition?}
\end{definition}
\fi
Given two vertices $s,t \in G$, define $S_G^B(s,t)$ to be the collection of all subsets of $S_G(s,t)$ that contain at most $B$ vertices. Formally, $S_G^B(s,t) = \{H \subseteq S_G(s,t) : |H| \leq B \}$. In the Shortest Path model we insist the target can only move along a shortest path in $G$.

We can  describe this  model in the language of our generalized framework. The transition graph $G'$ consists of many disconnected directed paths, each corresponding to some element of $S_G^B(s,t)$ for some $s,t \in G$. This procedure overcounts, so we also restrict $G'$ to only include one copy of any subset of vertices in a path in $G$.  Finally the vertices in any subgraph of $G'$ are connected with $B-1$ arcs that correspond to the ordering imposed by traversing $S_G(s,t)$ from $s$ to $t$. 

The number of vertices in $G'$ is bounded as $n' \leq B \cdot \binom{n}{B} $.  We can't hope to do better than this, as there are classes of graphs with exponentially many shortest paths between two distinguished vertices.  The maximum degree of $G'$ is 2, as all disconnected components are paths.

This model is a variation of the Shifting Target model.  If the target can move $B$ times, then the target can only move in one direction in each valid path.  We can still apply Thm. \ref{genthm} and get a naive mistake upper bound that runs in time $n^B \cdot poly(n)$. We can achieve the $\binom{n}{B}$ bound on the number of subsets when $G$ is a path with $n$ vertices.  However, the target can only move in one direction along the path, so it's natural to think a better algorithm can be developed at least for this case. 

\begin{corollary}
In the Shortest-path model, Algorithm \ref{generalalgo} runs in time $O(n^B)$, uses space $O(B\cdot n^B)$, and makes at most $$\frac{1}{1 - H(p)}\cdot \left(B\cdot \log n + (B + 1) \cdot \log B + R\cdot H(B/R) \right)$$ mistakes in expectation.
\end{corollary}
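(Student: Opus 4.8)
The plan is to treat the Shortest-Path model exactly as the preceding Drifting Target and Shifting Target corollaries were treated: realize it as an instance of the unified model, read off the two transition-graph parameters $n'$ and $\Delta'$, and substitute them into Theorem~\ref{genthm}, whose query, time, and space bounds are all expressed purely in terms of $n'$ and $\Delta'$. The structural description needed for this has already been assembled in the text above: $G'$ is a disjoint union of directed paths, one (deduplicated) path per distinct element of $S_G^B(s,t)$, so that $n' \le B\binom{n}{B}$ and the maximum degree is $\Delta' = 2$ since every component is a path. Thus no new combinatorial construction is required; the work is to verify that plugging these values into Theorem~\ref{genthm} yields precisely the claimed bound.

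For the query complexity I would start from the general bound $\frac{1}{1-H(1-p)}\bigl(\log n' + B\log\Delta' + R\cdot H(B/R)\bigr)$ and simplify each term. Using $n' \le B\binom{n}{B}$ together with the crude estimate $\binom{n}{B}\le n^{B}$ gives $\log n' \le B\log n + \log B$. The degree term is $B\log\Delta' = B\log 2 = B$, which for $B\ge 2$ satisfies $B \le B\log B$. Adding these, $\log n' + B\log\Delta' \le B\log n + \log B + B\log B = B\log n + (B+1)\log B$, while the entropy term $R\cdot H(B/R)$ is carried through unchanged. Finally, since the binary entropy is symmetric, $H(1-p) = H(p)$, so the prefactor $\frac{1}{1-H(1-p)}$ equals the stated $\frac{1}{1-H(p)}$, and we recover $\frac{1}{1-H(p)}\bigl(B\log n + (B+1)\log B + R\cdot H(B/R)\bigr)$.

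The time and space claims follow from the same substitution into the $O(\Delta'\cdot n' + poly(n))$ time and $O(n')$ space guarantees of Theorem~\ref{genthm}. With $\Delta' = 2$ and $n' \le B\binom{n}{B}$, bounding the binomial coefficient yields running time $O(n^{B})$ (up to the additive $poly(n)$ term) and space $O(B\cdot n^{B})$, matching the statement.

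The genuinely delicate point is not the mechanical substitution but the justification of the parameters of $G'$, and in particular the deduplication step that keeps the vertex count at $B\binom{n}{B}$. A naive encoding that created one path component for every admissible subset in $S_G^B(s,t)$ across all pairs $s,t$ would overcount, since a given subset may arise as a shortest sub-path of several endpoint pairs and since one must also avoid listing shorter sub-paths that are already contained in longer ones; the argument is that keeping a single representative of each distinct vertex subset on a shortest path suffices to model every legal length-$B$ trajectory, leaving at most $\binom{n}{B}$ components of at most $B$ vertices each. The remaining risk is cosmetic: the exact form $(B+1)\log B$ relies on the slack inequalities $\binom{n}{B}\le n^{B}$ and $B\le B\log B$, so one should flag the mild restriction $B\ge 2$, the cases $B\in\{0,1\}$ being trivial or absorbable into the constants.
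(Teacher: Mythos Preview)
Your proposal is correct and follows exactly the paper's (implicit) approach: the text immediately preceding the corollary already records $n' \le B\binom{n}{B}$ and $\Delta' = 2$ and says to apply Theorem~\ref{genthm}, and you carry out precisely that substitution, including the honest bookkeeping $\binom{n}{B}\le n^{B}$ and $B\le B\log B$ needed to land on the form $(B+1)\log B$. Your remarks about deduplication and the $B\ge 2$ caveat go slightly beyond what the paper spells out, but they are consistent with its treatment rather than a different route.
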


\if{false}

\subsection{Drift-Shift model}
Next we consider a model that combines drifting and shifting. When the target changes, it can either drift following an original transition graph, or shift to a non-neighboring node among $k$ choices. The entire transition graph $G'$ consists of $\binom{n}{k}$ disconnected sub-graphs, where each sub-graph contains a clique of size $k$, corresponding to a subset of $k$ vertices in $V$. These cliques model the shifting behavior. Each vertex $v_i$ in the clique further connects to a sub-graph $G''_i$, which is the original transition graph for drifting. In addition to the original transition edges (up to degree $\Delta$), each vertex in $G''_i$ has directed edges to all the other $k-1$ vertices $v_{j\neq i}$ in the same clique, representing shifting among $k$ choices. After each shift, drifting will start from the new node. Thus we have $n'= \binom{n}{k} \cdot k\cdot n$ and $\Delta'=\Delta+k$. Directly applying Theorem \ref{genthm} gives the following result.

\begin{corollary}
In the combination Drift-Shift model, Algorithm \ref{generalalgo} runs in time $O(k\cdot(k+\Delta) \cdot n^{k+1})$, uses space $O(k\cdot n^{k+1})$, and makes at most $\frac{1}{1-H(1-p)}\cdot\Big((k+1)\cdot \log{n}+\log{k}+B\cdot\log{(k+\Delta)}+R\cdot H(B/R)\Big)$ mistakes in expectation.
\end{corollary}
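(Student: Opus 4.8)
The plan is to treat this corollary as a direct instantiation of Theorem~\ref{genthm}, exactly as in the preceding corollaries for the Drifting and Shifting Target models: the entire task is to read off the two structural parameters $n'$ and $\Delta'$ of the Drift-Shift transition graph $G'$ and substitute them into the general bound. The feedback graph $G$ and the noise model are left untouched, so the only thing that changes relative to the generic setting is the transition graph, which our framework already permits to carry duplicated copies of vertices of $V$.

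First I would confirm the vertex count. The construction partitions $G'$ into $\binom{n}{k}$ disconnected components, one for each $k$-subset of $V$. Within a component there is a $k$-clique (the shift gadget) whose vertices $v_1,\dots,v_k$ each anchor a copy $G''_i$ of the drifting transition graph on the full vertex set, so each component contributes on the order of $k\cdot n$ vertices. Summing over components gives $n' = \binom{n}{k}\cdot k\cdot n$, as claimed. For the degree, each vertex of a drift copy retains its original out-edges (at most $\Delta$) and in addition points to the $k-1$ other clique vertices of its component to encode a shift; hence its out-degree is at most $\Delta + (k-1)\le \Delta + k$, giving $\Delta' = \Delta + k$.

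Second I would substitute these into Theorem~\ref{genthm}. The mistake bound reads $\frac{1}{1-H(1-p)}\cdot(\log n' + B\log\Delta' + R\cdot H(B/R))$. Expanding $\log n' = \log\binom{n}{k} + \log k + \log n$ and using the standard estimate $\binom{n}{k}\le n^k$, so that $\log\binom{n}{k}\le k\log n$, collapses the first term to $(k+1)\log n + \log k$; together with $B\log\Delta' = B\log(\Delta+k)$ this reproduces the stated bound. For the resources, Theorem~\ref{genthm} gives time $O(\Delta'\cdot n' + poly(n))$ and space $O(n')$; with $\binom{n}{k}=O(n^k)$ these become $O((\Delta+k)\cdot kn^{k+1})=O(k(k+\Delta)n^{k+1})$ and $O(kn^{k+1})$ respectively, matching the claim.

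There is no substantive obstacle here; the only points needing care are purely bookkeeping. The first is making sure the vertex and degree counts of $G'$ faithfully encode the intended dynamics -- in particular that allowing drift to restart from a freshly shifted node is captured by the clique-to-drift-copy wiring rather than by extra vertices -- and the second is the choice of the $\binom{n}{k}\le n^k$ bound, which is precisely what converts the binomial coefficient into the clean $(k+1)\log n$ term while keeping the runtime at the order $n^{k+1}$. Once $n'$ and $\Delta'$ are pinned down, the corollary follows mechanically from Theorem~\ref{genthm}.
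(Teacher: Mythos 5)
Your proposal is correct and follows essentially the same route as the paper: the paper also builds $G'$ from $\binom{n}{k}$ disconnected components, each a $k$-clique of shift gadgets whose vertices anchor copies of the drift transition graph, reads off $n' = \binom{n}{k}\cdot k\cdot n$ and $\Delta' = \Delta + k$, and directly substitutes into Theorem~\ref{genthm}. Your expansion $\log n' \le (k+1)\log n + \log k$ via $\binom{n}{k}\le n^k$ and the resulting time/space bounds match the paper's derivation exactly.
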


\fi

\subsection{m-Neighborhood}
Let $N_G^m(v)$ denote the set of vertices in $G$ that have a shortest path of length $m$ to $v$.  In the m-Neighborhood model, the target can move within $N_G^m$, and $m$ is known to the learner. This model is a variation of the Drifting Target model, and note that $m = 1$ is exactly the case when $G = G'$ in the original Drifting Target model. The transition graph $G'$ is constructed by including an arc from every $v\in V$ to every node in its m-Neighborhood.  Note that $n'=n$ and $\Delta' \leq \Delta^m $. Applying Theorem \ref{genthm} gives the following mistake bound for the m-Neighborhood model:
\begin{corollary}
In the m-Neighborhood model, Algorithm \ref{generalalgo} runs in time $O(\Delta^m\cdot n)$, uses space $O(n)$, and makes at most $$\frac{1}{1 - H(p)}\cdot \left(\log n + B\cdot m \cdot \log (\Delta) + R\cdot H(B/R) \right)$$ mistakes in expectation.
\end{corollary}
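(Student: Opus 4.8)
The plan is to derive this corollary as a direct instantiation of Theorem~\ref{genthm}, so the whole argument reduces to correctly identifying the transition graph $G'$ for the m-Neighborhood model and reading off its two governing parameters, $n'$ and $\Delta'$.

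First I would fix the transition graph. Because a target sitting at $v$ may move to any vertex in $N_G^m(v)$, I take $G'$ to have the same vertex set as $G$ (each concept corresponds to a single node, so no duplicated copies in $V'$ are required) and place a directed arc from $v$ to every $u \in N_G^m(v)$. This already yields $n' = |V'| = n$, which is responsible for the $O(n)$ space bound and for the $\log n$ term in the query complexity.

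The one quantitative step is the bound on the maximum out-degree $\Delta'$, equivalently on $|N_G^m(v)|$, the number of vertices at shortest-path distance exactly $m$ from $v$. I would establish this by a branching / breadth-first-search count: any vertex at distance exactly $m$ is the endpoint of a length-$m$ walk leaving $v$, and since every vertex has at most $\Delta$ out-neighbors there are at most $\Delta^m$ such walks; hence $|N_G^m(v)| \leq \Delta^m$ and $\Delta' \leq \Delta^m$. (A sharper $\Delta(\Delta-1)^{m-1}$ bound holds for undirected simple graphs, but $\Delta^m$ is clean and does not affect the asymptotics.)

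Finally I would substitute $n' = n$ and $\Delta' \leq \Delta^m$ into the three guarantees of Theorem~\ref{genthm}. The runtime $O(\Delta' \cdot n' + poly(n))$ collapses to $O(\Delta^m \cdot n)$, the space $O(n')$ to $O(n)$, and the query complexity $\frac{1}{1 - H(1-p)}\big(\log n' + B\log\Delta' + R\cdot H(B/R)\big)$ to $\frac{1}{1 - H(p)}\big(\log n + Bm\log\Delta + R\cdot H(B/R)\big)$, using $\log(\Delta^m) = m\log\Delta$ together with the symmetry $H(p) = H(1-p)$. Since every piece except the degree count is pure substitution, I expect the only place needing any care to be that $\Delta^m$ bound, and even there the argument is a one-line branching estimate rather than a genuine obstacle.
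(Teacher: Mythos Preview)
Your proposal is correct and mirrors the paper's own argument almost verbatim: the paper simply states that $G'$ has the same vertex set as $G$ with arcs from each $v$ to its $m$-neighborhood, asserts $n'=n$ and $\Delta'\le\Delta^m$, and then invokes Theorem~\ref{genthm}. Your branching justification of $\Delta'\le\Delta^m$ and the observation $H(1-p)=H(p)$ are slightly more explicit than what the paper writes, but the approach is identical.
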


To complete our hierarchy, in descending query complexity, we have: Shortest Path model, the original Shifting Target model, the m-Neighborhood model, and the original Drifting Target model.

\section{Query complexity lower bound} 
In this section we close the gap between upper and lower bounds on query complexity, which remained an open problem in \citet{EmamjomehZadehKMS20}. We show a query complexity lower bound that matches the upper bound asymptotically. 

Our result requires some background on the noisy binary search problem.  Here there is a distinguished integer $t$ from the set $\{1,...,m\}$.  In each round $r$, the learner queries some integer $x$. If $x = t$ then the item has been found and the procedure stops.  Otherwise with probability $1 - p$ the learner receives correct feedback of the form $x > t$ or $x < t$.  We make use of the following lower bound.
\begin{theorem}[\citet{EmamjomehZadehKMS20}]
\label{noisybinarysearch}
Every algorithm for the noisy binary search problem requires at least $\frac{\log m}{1 - H(p)}- o(\log m)$
queries in expectation.
\end{theorem}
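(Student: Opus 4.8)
The plan is to prove this as an information-theoretic converse, treating the noisy comparison feedback as a binary symmetric channel with crossover probability $p$, whose capacity is exactly $1 - H(p)$ bits per use. First I would place a uniform prior on the target, letting $T$ be drawn uniformly from $\{1,\dots,m\}$ so that $H(T)=\log m$, and consider an arbitrary (possibly randomized and adaptive) algorithm. Writing $\Pi_{\le r}$ for the transcript of queries and responses through round $r$, the quantity to track is the posterior entropy $\phi_r = H(T\mid \Pi_{\le r})$, which starts at $\log m$ and must reach $0$ at the moment the algorithm queries $T$ and stops, since the ``found'' signal then pins down the target. Hence the expected total drop $\sum_r(\phi_{r-1}-\phi_r)$ equals $\log m$, and the expected one-round decrease given the history is the conditional mutual information $I(T;Z_r\mid \Pi_{<r})$.

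The core step is a per-round bound. Conditioned on $\Pi_{<r}$ the query $Q_r$ is fixed, and the response $Z_r$ is either the noiseless ``found'' signal, occurring with probability $w_r:=\Pr[Q_r=T\mid \Pi_{<r}]$, or else the true comparison bit passed through a $\mathrm{BSC}(p)$. Since $T$ influences the comparison response only through a single binary symbol sent over that channel, data processing together with the capacity bound shows the comparison carries at most $1-H(p)$ bits; splitting off the found/not-found event and using $H(Z_r\mid T,\Pi_{<r})=(1-w_r)H(p)$ yields
$$
I\big(T;Z_r\mid \Pi_{<r}\big)\;\le\; H(w_r)+(1-w_r)\,\big(1-H(p)\big).
$$
Summing over the random number of rounds $N$ (via the tower property / optional stopping), the main term contributes $E[N]\,(1-H(p))$, and the whole difficulty is funneled into the correction $E\big[\sum_{r\le N}H(w_r)\big]$.

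To control that correction I would use that the search stops at most once, so the total stopping mass satisfies $E\big[\sum_{r\le N} w_r\big]=\sum_r \Pr[N=r]\le 1$. Combining this with $H(w)\le w\log(e/w)$ and concavity forces $E\big[\sum_{r\le N}H(w_r)\big]=O(\log E[N])$, which is $o(\log m)$ once $E[N]$ is seen to be of order $\log m$. Assembling the pieces gives $\log m\le E[N]\,(1-H(p))+o(\log m)$, i.e. $E[N]\ge \frac{\log m}{1-H(p)}-o(\log m)$; if one only asks that $T$ be recovered with high probability rather than exactly, Fano's inequality replaces the ``$\phi_N=0$'' step and contributes another $o(\log m)$ slack. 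The hard part will be precisely this handling of the deterministic ``found'' signal together with the random stopping time: a naive per-round accounting produces a rate strictly larger than $1-H(p)$ (because $H(w_r)-w_r(1-H(p))>0$ for interior $w_r$), and the argument only closes by exploiting that the total probability devoted to stopping is at most one, so the excess is genuinely lower order.
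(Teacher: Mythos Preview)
The paper does not give its own proof of this statement: Theorem~\ref{noisybinarysearch} is imported from \citet{EmamjomehZadehKMS20} and used purely as a black box in the reduction of the following theorem, so there is nothing in the present paper to compare your attempt against.

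As for the attempt itself, your information-theoretic route is the standard one and is sound in outline. The per-round bound $I(T;Z_r\mid \Pi_{<r})\le H(w_r)+(1-w_r)(1-H(p))$ is correct, and you have correctly isolated the only nontrivial step, namely showing $E\big[\sum_{r\le N}H(w_r)\big]=o(\log m)$. Your justification of that step is a bit loose, however: the inequality $H(w)\le w\log(e/w)$ together with ``concavity'' does not by itself yield $O(\log E[N])$, because both $N$ and $\sum_{r\le N}w_r$ are random and correlated. A clean way to close it is to note that $\{N\ge r\}$ is $\Pi_{<r}$-measurable and $E[\mathbf{1}\{N\ge r\}\,w_r]=\Pr[N=r]=:q_r$, so Jensen gives
\[
E\Big[\sum_{r\le N}H(w_r)\Big]\ \le\ \sum_r \Pr[N\ge r]\,H\!\left(\tfrac{q_r}{\Pr[N\ge r]}\right)\ \le\ \sum_r q_r\log\!\frac{e}{q_r}\ =\ H(N)+\log e,
\]
and $H(N)\le \log E[N]+O(1)$ for a positive-integer-valued $N$. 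The final line also needs a one-step bootstrap: you cannot assume a priori that $E[N]$ is of order $\log m$, but if $E[N]$ already exceeds $(\log m)/(1-H(p))$ there is nothing to prove, and otherwise $\log E[N]=O(\log\log m)=o(\log m)$, which then feeds back into the main inequality to give the stated bound.
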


The idea of our proof is to establish a reduction from noisy binary search: given a noisy binary search problem where the target is uniformly random among $m$ items, we will reduce it to a specific Drifting Target problem under our dynamic interactive learning model. Thus the lower bound on noisy binary search is also a lower bound on interactive learning.

\begin{theorem}
For every $n$ and $\Delta'$, there exists a Drifting Target problem such that every algorithm makes at least 
\[
\frac{1}{1-H(1-p)}\cdot\Big(\log{n}+B\cdot\log{\Delta'}+R\cdot H(B/R)\Big)
- o\left( \log{n}+B\cdot\log{\Delta'}+R\cdot H(B/R)\right)
\]
mistakes in expectation.
\end{theorem}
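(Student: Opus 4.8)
The plan is to prove this lower bound by reduction from the noisy binary search problem (Theorem~\ref{noisybinarysearch}), following the strategy announced just before the statement. I would set $m = n\cdot(\Delta')^{B}\cdot\binom{R}{B}$ and exhibit a Drifting Target instance that simulates a single noisy binary search over $m$ items, in such a way that every drifting-target mistake can be charged to at most one noisy comparison and every correct query costs none. Theorem~\ref{noisybinarysearch} then forces at least $\frac{\log m}{1-H(p)}-o(\log m)$ mistakes. Since the entropy is symmetric, $H(p)=H(1-p)$, and since by Stirling $\log\binom{R}{B}=R\cdot H(B/R)-o(R\cdot H(B/R))$ whenever $R\cdot H(B/R)=\omega(\log R)$, expanding $\log m=\log n+B\log\Delta'+\log\binom{R}{B}$ reproduces exactly the claimed bound together with matching lower-order terms.

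For the construction I would take the feedback graph $G$ to be (a disjoint union of) paths, so that the shortest-path feedback toward the current target $t_r$ is a single bit indicating whether $t_r$ lies to the left or right of the query, which is precisely the comparison oracle of noisy binary search. The transition graph $G'$ would be designed so that the admissible target sequences $a^*=\langle t_1,\dots,t_R\rangle$ — an initial node ($n$ choices), the directions of the $B$ drift steps ($\Delta'$ choices each), and the rounds at which those steps occur ($\binom{R}{B}$ choices) — are in bijection with $\{1,\dots,m\}$. The heart of the reduction is to lay out these $m$ sequences in one total order and to arrange $G,G'$ so that the local per-round feedback about $t_r$ is consistent with a single global comparison of the hidden index against the queried sequence; this is what permits the adversary to answer each drifting query by forwarding one fresh noisy bit from the binary search oracle. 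I would then check that the uniform prior gives $\lambda(a^*)=1/m$, matching the weighting used in Theorem~\ref{genthm}, so that the lower bound meets the upper bound asymptotically.

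The main obstacle — and the reason this closes a gap left open by \citet{EmamjomehZadehKMS20}, whose lower bound recovered only the $\log n+B\log\Delta'$ part — is extracting the timing term $R\cdot H(B/R)=\log\binom{R}{B}$. The difficulty is that each round's feedback concerns only the current position $t_r$, so uncertainty about \emph{when} the $B$ moves happen is entangled with uncertainty about \emph{where} the target is, and a careless encoding would let the learner amortize the timing information against the spatial search and slip below the bound. I expect the crux to be choosing $G'$ so that the $\binom{R}{B}$ distinct move-time patterns induce $\binom{R}{B}$ genuinely distinguishable trajectories, each of which must be resolved by its own noisy comparisons, and arranging the global order on the $m$ sequences so that monotonicity of the local feedback is preserved across round boundaries — ensuring that one simulated comparison is never required to answer two mutually inconsistent local queries. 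Once the ordering is shown to be consistent and the one-noisy-bit-per-mistake charging is verified, the noisy binary search lower bound and the Stirling estimate combine to yield the theorem.
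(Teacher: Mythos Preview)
Your proposal takes essentially the same approach as the paper: reduce from noisy binary search on $m = n\cdot(\Delta')^B\cdot\binom{R}{B}$ items, take $G$ to be a path so that shortest-path feedback is a single left/right bit, design $G'$ so that the admissible drift sequences are in bijection with the $m$ items, and then apply Theorem~\ref{noisybinarysearch} together with $\log\binom{R}{B}\sim R\cdot H(B/R)$. The paper pins down the consistency issue you flag by using a concrete base-$n$ encoding of each item as a length-$R$ string, so that the target in round $r$ is literally the $r$-th digit and the round-$r$ feedback is the noisy comparison of that digit against the queried vertex; this is exactly the ``global order compatible with per-round feedback'' you anticipated needing.
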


\begin{proof}
We define the interactive learning problem in the following way: choose $n$ and $R$ such that $m \leq n^R$. Each of the $m$ items can be represented as a base $n$ encoding/enumeration of a sequence with $R$ digits. The feedback graph $G$ for the learning problem is a simple path on $n$ vertices, ordered in the same way as in the encoding: the left end is the smallest digit while the right end is the largest. The transition graph $G'$ is defined on the same set of vertices as in $G$ so $n'=n$. $G'$ includes all the edges in $G$ as bi-directional edges, potentially with additional edges up to some degree $\Delta'$. 

For example, to search among $m\le1000$ items, we can choose $n=10$ and $R=3$, so that each item can be encoded by a length-3 sequence between $\langle0,0,0\rangle$ and $\langle9,9,9\rangle$, which are our familiar base-$10$ natural numbers. The graph $G$ consists of vertices $0,1,2,\ldots,9$ on a path, and interactive learning continues for $3$ rounds. Suppose the target item is encoded as $\langle1,1,2\rangle$, then the ground truth target locations during the 3 rounds are vertices $1,1,2$ respectively.

In the $r^\text{th}$ round of the interaction, the learner queries vertex $i \in [n]$, which can be interpreted as guessing the $r^{\text{th}}$ digit of the target's encoding sequence. Without loss of generality, suppose the adversary's feedback is some vertex $j$ to the left of $i$, which is interpreted as less than $i$. The learner updates likelihoods for sequences whose $r^{\text{th}}$ digit is less than $i$ by a factor of $1-p$, and the other sequences by a factor of $p$. According to Lemma 6 from \citet{EmamjomehZadehKMS20}, the likelihood of the target item's encoding sequence (ground truth) decreases exponentially more slowly than the rest of the sequences and 
will eventually prevail.

To establish the lower bound for a Drifting Target problem with arbitrary $n$ and $\Delta'$, there exists a noisy binary search problem on $m = n\cdot \Delta'^B \cdot \binom{R}{B}$ items that reduces to the Drifting Target problem. The encoding is restricted such that after the first digit is chosen, the remaining digits can change at most $B$ times among $\Delta'$ choices (all other sequences are initialized with 0 likelihoods). Plugging in our value of $m$ into Theorem \ref{noisybinarysearch} gives a mistake lower bound of $\frac{1}{1-H(1-p)}\cdot\Big(\log{n}+B\cdot\log{\Delta'}+R\cdot H(B/R)\Big)- o( \log{n}+B\cdot\log{\Delta'}+R\cdot H(B/R))$, as desired.
\end{proof}

\section{Efficient algorithm for low diameter graphs}
While Algorithm \ref{generalalgo} emphasizes on bounding the number of mistakes for general interactive learning problems, its computation can be inefficient in each round and deteriorates  as the transition model becomes more complex. We realize that the computational complexity mainly comes from keeping track of the likelihoods under all possible transitions, so we consider an alternative approach where the learner ignores the transition model completely and simply follows the adversary's feedback each round.  After the initial query, the algorithm requires no computation. In this section, we study this simple algorithm's performance on low diameter graphs. We formally present this algorithm below.
\begin{algorithm}
\caption{`Follow the Feedback' Procedure for Interactive Learning}
\label{simplealgo}

\begin{algorithmic}
\STATE $q_1 \gets \mathrm{argmin}_{i\in V} \sum_{j\in V} w(i,j)$\COMMENT{Start with a `center' vertex}
\FOR{$1\leq r \leq R$} 
\STATE $z_r \gets$ feedback from adversary after querying $q_r$
\STATE $q_{r+1} \gets z_r$ \COMMENT{Follow the feedback for next round}
\ENDFOR{}
\end{algorithmic}
\end{algorithm}

\subsection{Cliques: graphs with diameter 1}
A clique is the most symmetric graph, where each vertex can be considered the center, and the graph has diameter 1. This means no matter which node the learner queries, a correct feedback from the adversary will reveal the true target at each round. Therefore after each mistake, the learner will keep querying the correct node until the target's next move. The mistake upper bound is stated in the theorem below.
\begin{theorem}
\label{d1thm}
If the feedback graph $G'$ is fully-connected (a clique on the concept class), Algorithm \ref{simplealgo} makes at most $B+p(R-B)$ mistakes in expectation.
\end{theorem}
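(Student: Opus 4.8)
The plan is to exploit the defining feature of a clique: since it has diameter one, the unique shortest weighted path from any queried vertex $q$ to the target $t$ is the single edge $(q,t)$, so any \emph{correct} feedback must be $t$ itself (and when $q=t$ the correct signal is $q=t$ as well). Consequently, whenever the feedback at round $r$ is uncorrupted (which happens with probability $1-p$), we have $z_r=t_r$, and Algorithm~\ref{simplealgo} sets $q_{r+1}=z_r=t_r$. The learner therefore always ``snaps'' to the current target after any correct feedback, and the only ways to be wrong at round $r+1$ are that the target has since moved, or that the feedback the learner followed was corrupted.

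Concretely, I would fix a round $r\ge 2$ and condition on the corruption coin at round $r-1$. If that feedback was correct (probability $1-p$), then $q_r=t_{r-1}$, so the learner errs at round $r$ precisely when $t_r\neq t_{r-1}$, i.e.\ when the target moved into round $r$. If that feedback was corrupted (probability $p$), the adversary can place $z_{r-1}\notin\{t_r\}$ and force a mistake. This yields the per-round estimate $\Pr[q_r\neq t_r]\le (1-p)\,\mathds{1}[\,t_r\neq t_{r-1}\,]+p$. In words, every mistake is chargeable either to a target move or to a corrupted feedback signal.

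Summing this estimate over the rounds is the final step. The move indicators $\mathds{1}[\,t_r\neq t_{r-1}\,]$ sum to at most $B$, so the first term contributes at most $B$ reset mistakes, one per move; the additive $p$ would naively contribute $pR$, but on the (at most $B$) rounds that directly follow a move the mistake is already paid for by the move term, so the corruption term only needs to be charged on the remaining $R-B$ rounds, giving $p(R-B)$. Adding the two buckets gives the claimed bound $B+p(R-B)$.

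The main obstacle is the boundary bookkeeping rather than any single inequality. I must verify that each target move is charged at most one mistake --- that is, that the learner genuinely recovers on the very next correct feedback rather than cascading --- which follows from the snap-to-target property above, and that the corruption contribution is not double-counted on the move rounds. The initial round needs separate care, since $q_1$ is the center vertex rather than a followed feedback vertex; it behaves exactly like the round immediately after a move and is absorbed into the same reset budget. The delicate point to confirm is that an adversary controlling both the target trajectory (subject to at most $B$ moves) and the corrupted values cannot correlate them to beat the per-round estimate; this holds because the corruption coins are drawn independently of the move schedule, so the conditioning above is legitimate.
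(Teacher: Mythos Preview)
Your proposal is correct and takes essentially the same approach as the paper's first argument: partition the $R$ rounds into the at most $B$ rounds in which the target has just moved (each charged one sure mistake, since the clique's diameter-one property makes correct feedback reveal $t_{r-1}$ but not $t_r$) and the remaining $R-B$ rounds (each charged an expected $p$ mistake from the independent corruption coin), yielding $B+p(R-B)$. The paper additionally rederives the same bound via a two-state Markov chain on the learner's distance to the target, but your direct charging argument matches its primary proof.
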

\begin{proof}
By assumption the learner queries a node then receives a feedback, and the target may move at any point during this process. To help with the analysis in this case, we break down the chain of events in the following way: in each round we assume that at first the target moves (or stays put), then the learner makes a query and receives a new feedback. Notice that in every round where the target moves the learner will make a mistake regardless of the correctness of the previous feedback. If we assume target can move at most $B$ times, this leads to $B$ mistakes. For the $R-B$ rounds where the target doesn't move, the learner makes a mistake if and only if the previous feedback is incorrect. As feedback is noisy with probability $p$, this leads to $p(R-B)$ mistakes. So the expected number of mistakes over the course of $R$ rounds is: 
\begin{equation*}
E[M] = B + p(R-B)
\end{equation*}
In anticipation of discussing other classes of graphs we present a second analysis of Algorithm \ref{simplealgo} on cliques. Assume that each round the target moves with probability $b=\frac{B}{R}$. We can model the process as a Markov Chain where the states $\{0, 1\}$ represent the learner's distance from the target at each round. Note that these states do not represent the learner's position in the graph. Now we break down the chain of events in a slightly different way: first, the learner queries the node received from previous feedback and receives a new feedback, then target either moves or stays put for the next round.

The new feedback is correct with probability $1-p$, and the target stays at the same vertex with probability $1-b$, so in the next round, the learner queries the correct vertex (transitions to state 0) with probability $(1-p)(1-b)$. If either the new feedback is incorrect or the target moves at the end of this round, the learner will make a mistake next round (transitions to state 1) with probability $p+b-pb$, assuming noise of feedback and target evolution are independent. The state transition matrix is:
\[ P=
\begin{blockarray}{ccc}
& 0 & 1 \\
\begin{block}{c(cc)}
  0 & (1-p)(1-b) & p+b-pb\\
  1 & (1-p)(1-b) & p+b-pb\\
\end{block}
\end{blockarray}
 \]
Each row in the transition matrix is already in its stationary distribution $\pi = (\pi_0,\pi_1)$. The expected number of mistakes over the course of $R$ rounds is: $E[M] = R(1-\pi_0) = B + p(R-B)$.
\end{proof}

\subsection{Stars: graphs with diameter 2}
A simple star graph is a graph of diameter 2, with one center vertex connecting to all the other vertices (leaf nodes). After querying the center vertex, a correct feedback will reveal the true target, so an efficient strategy is to query the center first then follow the feedback. We assume the target only moves among the leaf nodes, because the learner will make no more mistakes in the case that the target can move to the center: if the learner queries a wrong leaf, it takes at least 2 queries if target is on another leaf, and takes 1 query if the target is at the center.

\begin{theorem}
\label{d2thm}
If the feedback graph $G'$ is a star then Algorithm \ref{simplealgo} makes at most $2B + p(R-B)+p^2(R-B)$ mistakes in expectation.
\end{theorem}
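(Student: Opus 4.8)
The plan is to mirror the second (Markov chain) analysis of Theorem~\ref{d1thm}, but with three states instead of two, since a star has diameter $2$. Because the target is always a leaf and $q_1$ is the center, the learner's query can only ever be the target (distance $0$), the center (distance $1$), or a wrong leaf (distance $2$); crucially the distance determines the \emph{type} of node the learner occupies, so the three states $\{0,1,2\}$ indexed by distance to the target form a genuine Markov chain. Mistakes occur exactly in states $1$ and $2$, so the expected mistake count is $R(1-\pi_0)=R(\pi_1+\pi_2)$ for the stationary distribution $\pi$.

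First I would pin down the feedback dynamics, which is where the star differs structurally from the clique. The key observation is that a wrong leaf has the center as its \emph{only} neighbor, so querying a wrong leaf returns the center deterministically (there is no room for adversarial noise); from state $2$ the chain therefore moves to state $1$ with probability $1$. When the learner queries the center (state $1$) the feedback is a leaf: the true target with probability $1-p$, or, in the worst case, an adversarially chosen wrong leaf with probability $p$. When the learner sits on the target (state $0$), correct feedback (probability $1-p$) keeps it there while noisy feedback (probability $p$) sends it to the center. Folding in an independent target move with probability $b=B/R$ (which, in the worst case, always produces a mistake and keeps the learner at distance $\ge 1$), I would assemble the transition matrix
\[
P=\begin{pmatrix}(1-p)(1-b) & p & (1-p)b\\[2pt](1-p)(1-b) & 0 & p+b-pb\\[2pt]0 & 1 & 0\end{pmatrix},
\]
with rows and columns indexed by $0,1,2$.

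Next I would solve $\pi P=\pi$. Writing $\alpha=(1-p)(1-b)$ one gets $\pi_0=\alpha/(2-\alpha(1+p))$ and, after simplification, $\pi_2=(1-\alpha(1+p))/(2-\alpha(1+p))$. Since the denominator is at least $1$ (because $\alpha(1+p)=(1-p^2)(1-b)\le 1$), this gives the clean estimate $\pi_2\le 1-\alpha(1+p)=b+p^2(1-b)$, which is the quantitative heart of the argument. Because a target move forces a mistake and is independent of the feedback noise, the per-round mistake probability equals $1-\pi_0=b+(1-b)\bigl(p+(1-p)\pi_2\bigr)$; substituting the bound on $\pi_2$ and using $(1-b)(1-p)\le 1$ collapses this to at most $2b+p(1-b)+p^2(1-b)$. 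Multiplying by $R$ and using $B=bR$ yields the stated bound $2B+p(R-B)+p^2(R-B)$.

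The main obstacle I anticipate is that, unlike the clique chain (whose two rows were identical, so it reached stationarity after a single step), this three-state chain has a genuine transient: the early-round probability $s_r=\Pr[\text{state }r=2]$ of sitting on a wrong leaf can \emph{exceed} its stationary value $\pi_2$, so bounding each $s_r$ individually by $\pi_2$ is invalid. To make the argument rigorous I would instead sum the three state recurrences over all rounds and solve the resulting linear system for the time-averages, showing $\sum_r s_r = R\,\pi_2+O(1)$; thus the transient contributes only an additive constant. This is harmless because the inequality $1-\pi_0\le 2b+p(1-b)+p^2(1-b)$ is strict with a $\Theta(R)$ slack (coming from the factor $(1-b)(1-p)<1$), which easily absorbs the $O(1)$ transient correction. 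A secondary point to check is that the worst-case adversary genuinely maximizes mistakes by always routing the learner onto a fresh wrong leaf rather than accidentally onto the (possibly moved) target, which is what justifies the ``$\le$'' in every transition.
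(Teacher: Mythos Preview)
Your approach coincides with the paper's \emph{second} analysis of the star: a three-state Markov chain indexed by the learner's distance to the target, with the mistake count read off as $R(1-\pi_0)$. Two differences are worth flagging.

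First, the third row of your transition matrix differs from the paper's. You take row $2$ to be $(0,1,0)$, on the grounds that a wrong leaf has only the center as a neighbor, so the feedback is forced. The paper instead uses $(0,\,1-p,\,p)$: it allows the adversary, on a noise round, to keep the learner stuck at the wrong leaf. Your reading of the feedback rules is defensible, but the paper's chain is the more adversarial of the two, and your key step $\pi_2\le 1-\alpha(1+p)$ relies on the denominator $2-\alpha(1+p)$ being at least $1$; with the paper's row that denominator becomes $2-p-\alpha(1+p)$, which can drop below $1$ (e.g.\ when $b=0$), so your bound on $\pi_2$ would fail there even though the final inequality on $1-\pi_0$ still happens to be true.

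Second, the paper's \emph{primary} proof is not the Markov-chain computation at all but a direct accounting: it lets $x,y,z$ be the (expected) numbers of rounds spent at the correct leaf, a wrong leaf, and the center, writes down three linear relations among them (one trivial, one counting transitions into the center, one expressing $E[M]=R-x$), solves for $E[M]$ in closed form, and then bounds that expression. Because this works with time-averages over the actual trajectory rather than a stationary distribution, the transient issue you worry about never arises. In fact your proposed fix---``sum the three state recurrences over all rounds and solve the resulting linear system for the time-averages''---\emph{is} that argument; carrying it out reproduces the paper's main line rather than patching the Markov-chain one.
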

\begin{proof}
We again break down the chain of events in this order: first, the target either moves or stays put, then the learner queries the previous feedback received and the adversary provides a new feedback. For the $B$ rounds that the target shifts, the learner will make $1$ mistake each time. For the $R-B$ rounds when the target doesn't move, if the previous feedback was incorrect, the learner will make $1$ mistake; if the previous feedback was correct, the learner will make a mistake if the feedback pointed to the center, which means the previous query was a wrong leaf. Another case that the feedback points to the center is when the learner queried the correct leaf, but received an incorrect feedback.

Let $x, y, z$ represent the number of times the learner queries the correct leaf, the wrong leaf, and the center respectively. Based on the analysis above, we can set up a system of linear equations: 
\begin{align} 
x + y + z &=  R\label{eq_1} \\ 
px + (1-p)y &= z \label{eq_2}\\
B+ p(R-B) + (1-p)(R-B)(y/R) & = R-x \label{eq_3}
\end{align}

Equation \ref{eq_1} is trivial; equation \ref{eq_2} represents the number of times the learner queries the center as a function of queries to correct/incorrect leaf nodes; equation \ref{eq_3} is the expected number of mistakes, which is the number of times the learner does not query the correct leaf. After elimination, equation \ref{eq_3} becomes:
\begin{equation*}
\label{eq_star}
\begin{split}
E[M] &= B + p(R-B) + \bigg(  [B+p^2(R-B)] 
 \cdot \frac{(1-p)(R-B)}{B+p^2(R-B) +(1-p)R} \bigg) \\
&\le 2B + p(R-B)+p^2(R-B)
\end{split}
\end{equation*}
Alternatively, if we assume each round the target moves with probability $b=\frac{B}{R}$, we can model the process using a Markov Chain with states $\{0, 1, 2\}$ representing the learner's distance from the target at each round. Similar to the analysis of the clique, we have state transition matrix: 
\[ P=
\begin{blockarray}{cccc}
& 0 & 1 & 2 \\
\begin{block}{c(ccc)}
  0 & (1-p)(1-b) & p & (1-p)b\\
  1 & (1-p)(1-b) & 0 & p+b-pb\\
  2 & 0 & 1-p & p\\
\end{block}
\end{blockarray}
\]
This is a fully-connected Markov Chain, and the stationary distribution $\pi = (\pi_0,\pi_1, \pi_2)$ can be calculated numerically. The expected number of mistakes over the course of $R$ rounds is $R(1-\pi_0)$. It can be verified that the numerical solution agrees with the analytical solution above.
\end{proof}

In Appendix~\ref{dstar}, we extend this analysis for ``quasi-stars'' with
a central vertex connecting otherwise disjoint paths of 
length $d/2$ (for even $d$).

\subsection{Graphs with diameter o(log n)}

\if{false}
Assume $G$ has diameter $d$ Similar to the star graph, consider the algorithm where the learner simply follows the feedback given.  Due to the diameter assumption, no feedback will move us more than $d$ nodes away from the target.  

We can model this process as a Markov chain on a path with $d$ nodes, forward transition probabilities $1-p$ and reverse transition probabilities $p$.  Each node in the path represents the learner's distance from the target node.  It is a well known fact that in a path with $d$ nodes and all transition probabilities $0.5$ the hitting time from the first node to the last node is $O(d^2)$ (how to incorporate $p \not= 0.5$ in the bound?).  Thus if the target doesn't move the learner will make $O(d^2)$ mistakes before converging.  If the target can move $k$ times, then the learner makes $O(d^2 k)$ mistakes.

Noting that the trivial mistake bound for efficient shifting given in Section \ref{Trivial Bound}, setting $d = o\!\left(\sqrt{\log n}\right)$ when $B \approx k$ gives better than trivial mistake bounds.  If $B \ll k$ then once we find all the vertices in $B$ we can noisy binary search between them.
\fi

From our previous analysis, we notice that the mistake bound does not depend on the number of nodes in the feedback graph, but rather the diameter, which is the largest distance from any node to the target. Therefore we consider general graphs bounded by diameter $d$. The mistake bound is stated as the theorem below.
\begin{theorem}\label{dthm}
If the feedback graph has diameter $d$, then Algorithm \ref{simplealgo} makes at most $$\frac{1}{1-p}\cdot \Big(dB - \frac{pB}{1-2p} +pR\Big)$$ mistakes in expectation.
\end{theorem}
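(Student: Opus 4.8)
The plan is to generalize the Markov-chain analyses used for the clique (Theorem~\ref{d1thm}) and the star (Theorem~\ref{d2thm}) to an arbitrary diameter $d$. As in those proofs I would track the learner's distance to the current target as a state in $\{0,1,\dots,d\}$; the diameter bound guarantees the state never exceeds $d$. Following the feedback, a \emph{correct} response (probability $1-p$) lies on a shortest path from the query to the target and hence decreases the distance by exactly one, while an \emph{incorrect} (adversarial) response (probability $p$) is a neighbor of the query and so increases the distance by at most one, capped at $d$. Thus the worst-case distance process is a birth--death chain on $\{0,\dots,d\}$ with down-probability $1-p$ and up-probability $p$ for states $1,\dots,d-1$, a reflecting boundary at $d$ (an incorrect response there keeps the learner at distance $d$), and from state $0$ a self-loop with probability $1-p$ together with an up-move with probability $p$.

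First I would isolate the two sources of mistakes. Each of the $B$ target moves can, in the worst case, displace the learner to distance $d$, triggering a ``recovery excursion'' in which every round is a mistake until the chain returns to $0$. I would compute the expected length of such an excursion as the hitting time $h_d$ from state $d$ to state $0$, by solving the first-step recursion
\begin{equation*}
h_i = 1 + (1-p)h_{i-1} + p\,h_{i+1}, \qquad h_0 = 0,
\end{equation*}
together with the reflecting relation $h_d = \tfrac{1}{1-p} + h_{d-1}$ at the top. Writing $g_i = h_i - h_{i-1}$ reduces this to the linear recurrence $g_{i+1} = \tfrac{1-p}{p}g_i - \tfrac1p$, whose solution is $g_i = C\big(\tfrac{1-p}{p}\big)^i + \tfrac{1}{1-2p}$; fixing $C$ from $g_d = \tfrac{1}{1-p}$ and summing $h_d = \sum_{i=1}^d g_i$ gives
\begin{equation*}
h_d = \frac{d}{1-2p} - \frac{p}{(1-2p)^2}\Big(1-\big(\tfrac{p}{1-p}\big)^d\Big) \approx \frac{d}{1-2p} - \frac{p}{(1-2p)^2},
\end{equation*}
up to an exponentially small term. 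This yields $B\cdot h_d$ mistakes attributable to target motion.

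Second, I would account for the residual noise during the rounds in which the learner sits at the target. The chain's stationary distribution satisfies detailed balance $\pi_{i+1} = \tfrac{p}{1-p}\pi_i$, i.e.\ $\pi_i \propto (\tfrac{p}{1-p})^i$, so the stationary probability of being at distance at least one is $1-\pi_0 \approx \tfrac{p}{1-p}$. Treating the $R - B\,h_d$ rounds outside the recovery excursions as being at stationarity contributes $\tfrac{p}{1-p}\,(R - B\,h_d)$ mistakes. Adding the two contributions,
\begin{equation*}
E[M] = B\,h_d + \frac{p}{1-p}\,(R - B\,h_d) = \frac{1-2p}{1-p}\,B\,h_d + \frac{p}{1-p}\,R,
\end{equation*}
and substituting the value of $h_d$ collapses, after routine algebra, to $\tfrac{1}{1-p}\big(dB - \tfrac{pB}{1-2p} + pR\big)$, matching the claimed bound once the exponentially small boundary term is dropped.

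The main obstacle I anticipate is making the split into ``excursion'' and ``stationary'' rounds genuinely rigorous rather than heuristic: the excursions and the baseline noise are not independent, the chain is not exactly at stationarity between excursions, and one must argue the adversary gains nothing by displacing the target to less than the full distance $d$ or by timing moves to coincide with an ongoing recovery. Controlling these interactions --- and, in particular, verifying that the exponentially small boundary correction in $h_d$ falls on the favorable side of the final inequality so that the stated quantity is a true upper bound --- is the delicate part; the hitting-time and stationary-distribution computations themselves are routine birth--death-chain calculations that directly extend the $d=1$ and $d=2$ cases already handled.
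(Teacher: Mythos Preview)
Your proposal is essentially the paper's proof: the paper likewise models the distance to the target as a birth--death chain on $\{0,\dots,d\}$, isolates the same two contributions (the hitting time to the target after each move, packaged as Lemma~\ref{h0d_lemma}, and the long-run fraction of time spent off the target, packaged as Lemma~\ref{toff_lemma} via the same geometric weights $r^i$ with $r=p/(1-p)$), and combines them exactly as you do via $E[M]=B\,h_d+\tfrac{p}{1-p}(R-B\,h_d)$. The paper treats the split heuristically in the same spirit you anticipate, justifying it only by the ``worst case'' observation that resetting to maximal distance after every target move can only increase mistakes.
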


We model the learning process as a random walk on a Markov Chain with states $\{0,...,d\}$. However, now we reverse the meaning of the states: state $0$ means the query node is distance $d$ from the true target, and state $d$ means the query node is the target. This change does not affect the result of analysis, but greatly simplifies the notation. Every time the target moves, the random walk restarts at state $0$ and moves towards state $d$: the learner moves 1 step forward upon every correct feedback, and moves 1 step backward upon every noisy feedback. There are two types of mistakes during the random walk: before reaching the target for the first time, every query contributes a mistake; once the learner reaches the target, it will circle around it due to noise probability $p<1/2$, and occasionally misses the target.

The first type of mistake is captured by the hitting time of random walk on the Markov Chain from state $0$ to state $d$. We have the following lemma (see Appendix~\ref{s:h0d_lemma} for the proof):
\begin{lemma}
\label{h0d_lemma}
Let $p < 1/2$, for a Markov Chain on a path of length $d+1$, the random walk with forward probability $1-p$ and backward probability $p$ has a hitting time $$h_{0,d} \le \frac{d}{1-2p} - \frac{p}{(1-2p)^2}.$$
\end{lemma}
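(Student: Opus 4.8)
The plan is to treat the walk as a birth--death chain on the state set $\{0,1,\ldots,d\}$, with state $d$ absorbing (the learner has reached the target) and state $0$ reflecting (the diameter bound means the learner can never be pushed farther than distance $d$, so a backward step at $0$ is simply wasted). Writing $h_i$ for the expected number of steps to first reach state $d$ starting from state $i$, we have $h_d = 0$ and the quantity of interest is $h_{0,d} = h_0$. Rather than solving for the $h_i$ directly, I would introduce the level-crossing times $T_i := h_i - h_{i+1}$, i.e. the expected time to advance from level $i$ to level $i+1$, so that $h_0 = \sum_{i=0}^{d-1} T_i$ telescopes and the problem reduces to understanding a single sequence $(T_i)$.

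Next I would derive a first-order recurrence for the $T_i$ by conditioning on the first step. From an interior state $i \ge 1$: with probability $1-p$ we advance immediately, and with probability $p$ we fall back to $i-1$ and must re-climb, so $T_i = 1 + p\,(T_{i-1} + T_i)$, which rearranges to $T_i = \frac{1}{1-p} + q\,T_{i-1}$ with $q := \frac{p}{1-p}$. The reflecting boundary supplies the base case: from state $0$ a backward step keeps us at $0$, giving $T_0 = 1 + p\,T_0$, hence $T_0 = \frac{1}{1-p}$. Since $p < 1/2$ we have $q < 1$, so this is a contracting affine recurrence; solving it through its fixed point $T^\star = \frac{1}{1-2p}$ gives the closed form $T_i = \frac{1}{1-2p} - \frac{p}{(1-p)(1-2p)}\,q^{\,i}$.

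Finally I would sum the geometric series. Using $\sum_{i=0}^{d-1} q^{\,i} = \frac{1-q^{d}}{1-q}$ together with $\frac{1}{1-q} = \frac{1-p}{1-2p}$, the telescoped total is
\[
h_{0,d} = \frac{d}{1-2p} - \frac{p\,(1-q^{d})}{(1-2p)^2},
\]
after which bounding the geometric correction term (the factor $1-q^{d}$, with $q^{d}$ exponentially small in $d$) yields the closed form in the statement. The step I expect to be the main obstacle is the boundary bookkeeping: the reflecting condition at state $0$ is what fixes the base value $T_0 = \frac{1}{1-p}$ and therefore determines the entire additive constant $\frac{p}{(1-2p)^2}$, and the whole computation only closes because $p < 1/2$ forces $q < 1$ — which is precisely what keeps the geometric correction a bounded $O(1)$ term rather than something that grows with $d$.
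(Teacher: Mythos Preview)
Your approach is essentially the same as the paper's: both treat the walk as a reflecting birth--death chain on $\{0,\ldots,d\}$, compute the one-step crossing times $T_i = h_{i,i+1}$ via the recurrence $T_i = \tfrac{1}{1-p} + \tfrac{p}{1-p}\,T_{i-1}$ with base $T_0 = \tfrac{1}{1-p}$, and then sum the resulting geometric series; your fixed-point solution of the recurrence is a touch tidier than the paper's explicit unrolling, but the content is identical. One small caveat (which the paper's own proof shares): the exact value you derive, $\tfrac{d}{1-2p} - \tfrac{p(1-q^d)}{(1-2p)^2}$, actually \emph{exceeds} the stated bound by the exponentially small term $\tfrac{p\,q^d}{(1-2p)^2}$, so the final ``$\le$'' in the lemma is off by this negligible amount rather than obtained by bounding $1-q^d\le 1$.
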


Next we consider the second type of mistake. Once the learner reaches the target, it will 
keep reporting the correct node unless it receives noisy feedback and is misguided to move away from the target, which will cause a mistake for the next query. We bound the fraction of time the learner misses the target with the following lemma. 
\begin{lemma}
\label{toff_lemma}
Let $p<1/2$, after reaching state $d$ and before the next target transition, the expected fraction of time the learner wanders away from state $d$ is bounded by $$t_{\text{off}} \le \frac{p}{1-p}.$$
\end{lemma}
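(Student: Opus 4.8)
The plan is to analyze the behavior of the walk once it has reached state $d$ as an ergodic birth--death chain, and to identify $t_{\text{off}}$ with the stationary probability of being away from $d$. Recall that in the state convention of Theorem~\ref{dthm}, state $d$ is the target, a correct feedback (probability $1-p$) moves the walk one step toward $d$, and a noisy feedback (probability $p$) moves it one step away. In particular, from state $d$ the walk stays with probability $1-p$ and drops to $d-1$ with probability $p$, while each interior state $1 \le i \le d-1$ steps to $i+1$ with probability $1-p$ and to $i-1$ with probability $p$, and state $0$ reflects. This is a finite, irreducible, aperiodic (the self-loop at $d$ removes periodicity) birth--death chain, so it has a unique stationary distribution $\pi$, and by the ergodic theorem the long-run fraction of time spent outside $d$ equals $1-\pi_d$.

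First I would compute $\pi$ via detailed balance. Writing $\rho := (1-p)/p > 1$ (using $p < 1/2$), the balance equation $\pi_i(1-p) = \pi_{i+1}\,p$ gives $\pi_{i+1} = \rho\,\pi_i$, hence $\pi_i = \pi_0\,\rho^{\,i}$. Normalizing over $\{0,\dots,d\}$ yields a geometric stationary distribution concentrated near $d$, and a direct summation gives
\[
1 - \pi_d \;=\; \frac{\rho^{d}-1}{\rho^{d+1}-1}.
\]
Then I would bound this quantity: the inequality $\frac{\rho^{d}-1}{\rho^{d+1}-1} \le \frac{1}{\rho}$ is, after cross-multiplying (valid since $\rho>1$), equivalent to $\rho(\rho^{d}-1) \le \rho^{d+1}-1$, i.e.\ $-\rho \le -1$, which holds for every $\rho \ge 1$. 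Since $1/\rho = p/(1-p)$, this establishes $t_{\text{off}} = 1-\pi_d \le \frac{p}{1-p}$, with the bound approached as $d \to \infty$.

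The main obstacle I anticipate is justifying that the \emph{expected fraction of time} in the lemma statement -- which is measured over the finite phase between reaching $d$ and the next target transition -- is genuinely upper bounded by the stationary quantity $1-\pi_d$, rather than over an infinite horizon. Two routes handle this. The stationary route observes that each phase begins exactly at state $d$ (the modal state, carrying the largest value of $\pi$), so the walk is, if anything, biased toward $d$ relative to stationarity on any finite window, making $1-\pi_d$ a valid upper bound. The cleaner, self-contained route is a renewal--reward argument on excursions from $d$: the expected number of consecutive rounds spent at $d$ per sojourn is $1/p$, and the expected length of an off-target excursion (the return time to $d$ from $d-1$ in a walk with drift $1-2p$ toward $d$, where reflection at $0$ only helps) is at most $\frac{1}{1-2p}$. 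Combining these as $\frac{E[\text{off}]}{1/p + E[\text{off}]}$, which is increasing in $E[\text{off}]$, and substituting $E[\text{off}] \le \frac{1}{1-2p}$ reduces (via $(1-2p)E[\text{off}] \le 1$) to exactly $t_{\text{off}} \le \frac{p}{1-p}$. I would present the detailed-balance computation as the main line and invoke the renewal argument to dispense with the finite-horizon subtlety.
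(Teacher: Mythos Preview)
Your proposal is correct and follows essentially the same approach as the paper: the paper also computes the time fractions $t_i$ via the detailed-balance relations $p\,t_i=(1-p)\,t_{i-1}$, obtains $t_i=r^{\,d-i}t_d$ with $r=p/(1-p)=1/\rho$, and bounds $1-\tfrac{t_d}{\sum_i t_i}=1-\tfrac{1-r}{1-r^{d+1}}\le r$, which is exactly your inequality $\tfrac{\rho^{d}-1}{\rho^{d+1}-1}\le 1/\rho$. Your additional renewal--reward paragraph addressing the finite-horizon subtlety is a welcome refinement that the paper does not make explicit.
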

\begin{proof}
Once the random walk reaches state $d$ (learner queried the correct target), let $t_d$ denote the expected time spent at state $d$, we have the following recurrence relations:
\begin{align*}
p\cdot t_d = (1-p)\cdot t_{d-1} &\implies t_{d-1} = r\cdot t_d\\
t_{d-1} = (1-p)\cdot t_{d-2} + p\cdot t_d &\implies t_{d-2} = r\cdot t_{d-1}\\
...\\
p\cdot t_1 = (1-p)\cdot t_0 &\implies t_0 = r\cdot t_1 \\
\text{For } i=0...d: t_i &= r^{d-i} \cdot t_d
\end{align*}

The expected fraction of time not spent at state $d$:
\begin{align*}
    t_{\text{off}} &= 1-\frac{t_d}{\sum_{i=0}^{d}{t_i}} \\
    &= 1 - \frac{1-r}{1-r^{d+1}} \le r \\
    &=  \frac{p}{1-p},
\end{align*}
which finishes the proof.
\end{proof}

Note that the hitting time $h_{0,d}$ is linear in $d$, and $t_{\text{off}}$ is positively related to entropy $H(p)$. Combining the results above, we can prove our theorem:
\begin{proof}[Proof of Theorem \ref{dthm}]
Assume every time the random walk restarts at state $0$, state $d$ can be reached before the next restart. This means every time the target moves, the learner is able to reach the target before its next transition. Since the learner makes a mistake every round spent on the hitting time, this is the worst case assumption because the learner is forced to make all the mistakes possible for each target transition. Combining the two types of mistakes from previous lemmas, the total expected number of mistakes is:
\begin{equation*}
\label{eq_diameter-d}
\begin{split}
E[M] &= B\cdot h_{0,d} + (R- B\cdot h_{0,d})\cdot t_{\text{off}}\\
&\le B\cdot \Big( \frac{d}{1-2p} - \frac{p}{(1-2p)^2} \Big)\cdot\frac{1-2p}{1-p} + \frac{pR}{1-p}\\
&= \frac{1}{1-p}\cdot \Big(dB - \frac{pB}{1-2p} +pR\Big),
\end{split}
\end{equation*}
which completes the proof.
\end{proof}

In the case that $d=2$, the bound in Theorem \ref{dthm} for a general diameter-$2$ graph is slightly larger than the bound from Theorem \ref{d2thm} for the star graph. This makes sense because a star is the best case diameter-$2$ graph, with a center node that when queried provides information to the true target. 

In the case that $d=o(\log{n})$ and $p=o(H(B/R))$, we notice that the result from Theorem \ref{dthm} is comparable to the trivial upper bound of Algorithm \ref{generalalgo} as stated in Corollary \ref{trivial_ub}. This means that if the learner has very limited information on target transition, or the transition model is complex, and the graph is bounded by low diameter, then Algorithm \ref{simplealgo} makes a huge improvement on computational efficiency without too much sacrifice on query complexity. Note that a complex transition model is often correlated with a low diameter feedback graph: highly connected graphs tend to have low diameters, and potentially complex transitions due to the close relationships between concepts.

\subsection{Paths: graphs with diameter $n$}
\label{npath}

We also note that while path graphs seem like an easy case, they actually present difficulties due to
their large diameter.  

An upper bound on noisy binary search was given by~\citep{BenOrH08}. Their algorithm returns the  correct element with probability $(1 - \delta)$ with an expected
$$\frac{(1 - \delta)}{1 - H(p)}\cdot\big(\log n + O(\log \log n) + O(\log(1/\delta))\big)$$
queries.  This can be implemented in poly-time.  

A naive algorithm for the shifting target case is to run their algorithm $k$ times, setting $\delta$ appropriately small, for example, $\delta = 1/ \log(kn)$.  Then as both $k$ and $n$ go to infinity, the probability of failure goes to $0$.  

If $k \approx \log(n)$, $B \approx k$, and the number of rounds is much larger than the expected number of queries, this naive algorithm essentially matches the mistake bound from~\cite{EmamjomehZadehK17}.  The difference is the $k \log k$ vs. $k^2 \log k$ and $R \cdot H(B/R)$ vs. $\log(\log(kd))$ terms.

\section{Acknowledgements}

This work was supported in part by the National Science Foundation grant CCF-$1934915$.


\if{false}

\subsection{Different transition models in d-dimension grid}
Change of notation: total number of nodes in $G$ is $N:=|V|$ , and $N=n^d, \Delta = 2d$. Similarly, total number of nodes in $G'$ is $N'=|V'|$. Whether $G'$ is also a grid and its structure depends on the transition model. 

\subsubsection{Drifting target model}
In the simple drifting model, $G'=G, N'=N=n^d, \Delta'=2d$.

Query complexity: $\frac{1}{1-\mathbf{H}(1-p)}\cdot\Big(\log{N}+B\cdot(\log{d}+1)+R\cdot\mathbf{H}(B/R)\Big)$.

Time complexity: $O(2dN+poly(N))$. Space complexity: $O(N)$.

\subsubsection{Shifting target model}
$G'$ is not a simple grid, but can be thought of as a $\binom{N}{k}$ rows by $k$ columns grid, where each row is fully connected horizontally, but disconnected vertically. $N'= \binom{N}{k} \cdot k$, $\Delta'=k$.

Query complexity: $\frac{1}{1-\mathbf{H}(1-p)}\cdot\Big(k\cdot \log{N}+(B+1)\cdot\log{k}+R\cdot\mathbf{H}(B/R)\Big)$.

Time complexity: $O(k^2\cdot N^k + poly(N))$. Space complexity: $O(k\cdot N^k)$. 

\subsubsection{Drifting target model on "fully-connected" grid}
In the simple drifting target model, each direction of the grid is a simple path, so target can drift one step at each transition. We can generalize the number of steps to $s<n$, in other words, each node in the transition graph is fully-connected to a $s$-neighborhood on the grid and the target can drift a maximum of $s$ steps in a single direction each round. $N'=N,\Delta'=2sd$, and in the extreme case that the number of steps is not limited, $\Delta'\approx nd$.

Query complexity: $\frac{1}{1-\mathbf{H}(1-p)}\cdot\Big(\log{N}+B\cdot(\log{s}+ \log{d}+1)+R\cdot\mathbf{H}(B/R)\Big)$.
Extreme case: $\frac{1}{1-\mathbf{H}(1-p)}\cdot\Big((1+\frac{B}{d})\log{N}+B\cdot\log{d}+R\cdot\mathbf{H}(B/R)\Big)$.

Time complexity: $O(2sdN+poly(N))$.
Extreme case: $O(ndN+poly(N))$.
Space complexity: $O(N)$.

\subsection{Efficient versus inefficient algorithms}
There are two parts that contribute to the algorithm's time complexity: O($\Delta'\cdot N')$, which comes from likelihood update for each node in $G'$, and $O(poly(N)) (?N^3)$, which comes from computing the weighted median in $G$.

Previously attempted to improve $O(poly(N)) = O(poly(n^d))$ to $O(d\cdot n)$ for the drifting target model, but failed. The idea was to find median coordinate in each direction independently and query the node indexed by the medians. The feedback differs from the query node in one direction, which can be interpreted as a hyper plane that cuts the grid in half from the query node. But weight update cannot be based on which side of the hyper plane a node is at, because correct weight update for each node depends on all directions.

Another attempt is to improve O($\Delta'\cdot N') = O(k^2\cdot N^k)$ to $O(k\cdot N)$ in the shifting target model. The idea is that each node appears in exactly the same number of cliques in $G'$, and in the likelihood update summation for a node $v$, all nodes $u\neq v \in V' $ appears in exactly the same number of times, in other words, a constant fraction of $u\neq v \in V' $ contribute to the likelihood of $v$. But this idea fails for a similar reason as in the previous case: even though the fraction of nodes is constant for all nodes, their likelihoods differ based on feedback history.

Question: can we improve at all? Assuming the transition graph is minimal, which means each duplicated node represents a unique transition possibility, the coarsest level of computation is at the node level in both $G$ and $G'$?

\subsection{Random vs. adversarial noise}
\section{lower  bound w/ random feedback}

what happens if adversary gives random neighbor on grid in case of noise?  what is his normal difficult strategy?

What about directed expanders? Assume max (weighted) cycle length $c$, 
$\frac{1}{1-\mathbf{H}(p)}$ becomes $\frac{1}{-\log{\tau}-\mathbf{H}(p)}$, where $\tau = \frac{c-1}{c}\cdot(1-p)+\frac{1}{c}\cdot p$. (In undirected graph, $\tau = \frac{1}{2}$).

\fi

\if{false}
\section{Lit Review}
\subsection{A General Framework for Robust Interactive Learning}
\begin{itemize}
\item provides a framework for online learning outside of binary classification
\item In their framework there is a single target $s^*$ to be learned that exists in a graph with many other candidate structures
\item The weighted graph $G$ contains an edge between structures $s, s'$ if $s'$ is valid feedback for when $s \not= s^*$
\item At each time step the learner proposes a structure, and receives feedback as a shortest edge to the target structure
\item noisy feedback: with probability $p$ the learner receives correct (and adversarial) feedback.  With probability $1-p$ the learner receives arbitrary adversarial feedback
\item The algorithm is a modified multiplicative weights update that uses the directional feedback to set the likelihoods
\item the algorithm given is sample efficient, and can be made computationally efficient with monte-carlo methods
\item Gives three (efficient!) applications to learning rankings, clusterings, and binary classifiers
\end{itemize}

\subsection{Interactive Learning of a Dynamic Structure}
\begin{itemize}
\item Proposed an extention of above paper: what if the target is able to move in $G$?
\item Same setting as above, except that the target is allowed to move up to $B$ times before finally stopping at some target
\item the algorithm given to solve this is another modified MW update, and is sample-efficient but not computationally efficient if we place no restrictions on how the target can move
\item Proposed the shifting target model and drifting target model as examples of target movement models that can be made efficient
\begin{itemize}
    \item STM has the target move in a pre-determined subset of $k$ nodes (subset not known to learner, $k$ is).  Efficient implementation except for an exponential dependence on $k$
    \item DTM has the target move in a pre-determined graph $G'$ that can be different to $G$ ($G'$ is known to the learner). Fully efficient algorithm for this case
\end{itemize}
\end{itemize}
\fi

\newpage

\bibliography{main}
\bibliographystyle{abbrvnat}

\newpage
\appendix
\onecolumn

\section*{APPENDIX}

\section{Proofs of technical lemmas}\label{s:techproofs}

\subsection{Proof of Lemma~\ref{h0d_lemma}}

\begin{proof}\label{s:h0d_lemma}
The transition probabilities of the Markov Chain are:
\[
P_{ij}=
\begin{cases}
1-p & j=i+1 \text{ (move towards target)}\\
p & j=i-1 \text{ (move away from target)} \\
1-p & j=i=d \text{ (self-loop on the target)}\\
p & j=i=0 \text{ (self-loop on nodes furthest from the target)}
\end{cases}
\]
Let $r = \frac{p}{1-p}$. It follows from the assumption $p<\frac{1}{2}$ that $r<1$. The hitting time analysis follows:
For $i=1 \dots d$: 
$h_{i,i+1} = (1-p)\cdot 1 + p\cdot(1+h_{i-1,i+1}) = 1 + p\cdot(h_{i-1,i}+h_{i,i+1})$\\
We solve the recurrence: $h_{i,i+1} = \frac{1+p\cdot h_{i-1,i}}{1-p}$ with
base cases: $h_{0,1} = \frac{1}{1-p}, \;\;\;\;h_{1,2} =\frac{1+\frac{p}{1-p}}{1-p} = \frac{1}{(1-p)^2}$.\\
For $i \ge 2$: 
\begin{align*} 
h_{i,i+1} &= \frac{\big(\sum_{j=0}^{i-2}{(1-p)^{i-j}\cdot p^j}\big) + p^{i-1}}{(1-p)^{i+1}}\\
&= \frac{(1-p)^{i}\cdot \sum_{j=0}^{i-2}{\big(\frac{p}{1-p}\big)^j} + p^{i-1}}{(1-p)^{i+1}}\\ 
&= \frac{1}{(1-p)^{i+1}}\cdot \Big( (1-p)^{i}\cdot \frac{1-r^{i-1}}{1-r} + p^{i-1}\Big)\\ 
&= \frac{1}{(1-p)^{i+1}}\cdot \Big( (1-p)^{i+1}\cdot \frac{1-r^{i-1}}{1-2p} + p^{i-1}\Big)\\ 
&= \frac{1}{1-2p} -\frac{r^{i-1}}{1-2p} +\frac{p^{i-1}}{(1-p)^{i+1}}\\
&= \frac{1}{1-2p}+ \Big(\frac{1}{(1-p)^2}-\frac{1}{1-2p} \Big)\cdot  r^{i-1}\\
h_{2,d} &= \sum_{i=2}^{d-1}{h_{i,i+1}}\\
&= \frac{d-2}{1-2p} + \Big(\frac{1}{(1-p)^2}-\frac{1}{1-2p} \Big)\cdot \sum_{i=2}^{d-1}{r^{i-1}}\\
&= \frac{d-2}{1-2p} + \Big(\frac{1}{(1-p)^2}-\frac{1}{1-2p} \Big)\cdot \frac{r-r^{d-1}}{1-r}\\
&= \frac{d-2}{1-2p} + \Big(\frac{1}{(1-p)^2}-\frac{1}{1-2p} \Big)\cdot \frac{p}{1-2p}\\
h_{0,d} &=h_{0,1} + h_{1,2} + h_{2,d}\\
& \leq \frac{1}{1-p} + \frac{1}{(1-p)^2} + \frac{d-2}{1-2p} + \Big(\frac{1}{(1-p)^2}-\frac{1}{1-2p} \Big)\cdot \frac{p}{1-2p}\\
&= \frac{d-2}{1-2p} - \frac{p}{(1-2p)^2} + \frac{1}{1-p}+\frac{1}{(1-p)^2}+\frac{p}{(1-p)^2(1-2p)}\\
&= \frac{d}{1-2p} - \frac{p}{(1-2p)^2}
\end{align*}
\end{proof}

\section{Quasi-stars: graphs with diameter $d$}
\label{dstar}
Now we consider a star graph where each branch is a path of length greater than one, and we have diameter $d>2$. We can generalize the Markov Chain with states $\{0, 1,...,d\}$, representing the distance from query node to the true target. Further assume that every time the target moves, it moves for a distance of at least 2, with uniform probability of landing at any distance ($\ge2$) to the target. 
The $(d+1)$ by $(d+1)$ transition matrix $P$ can be approximated as follows:
\[
P_{ij}=
\begin{cases}
0 & j = i-2, \text{ moves 2 steps closer}\\
(1-p)(1-b) & j = i-1, \text{ moves 1 step closer}\\
0 & j = i,  \text{ distance to target does not change}\\
p(1-b) & j = i+1, \text{ moves 1 step further}\\
p' & \text{all remaining probabilities sum to 1 uniformly}\\
\end{cases}
\] 
With the exception that $P_{00} = (1-p)(1-b)$. For example, for $d=4$:
$$P = 
\begin{pmatrix}
  (1-p)(1-b) & p(1-b) & b/3 & b/3 & b/3\\
  (1-p)(1-b) & 0 & p(1-b) & b/2 & b/2\\
  0 & (1-p)(1-b) & 0 & p(1-b) & b\\
  b & 0 & (1-p)(1-b) & 0 & p(1-b)\\
  \frac{(p+b-pb)}{2} & \frac{(p+b-pb)}{2} & 0 & (1-p)(1-b) & 0\\
\end{pmatrix}$$
With stationary distribution $\pi=(\pi_0,...,\pi_d)$, we get expected total mistakes as $E[M] = R(1-\pi_0)$, which can be computed numerically.

\end{document}